\documentclass{article}
\usepackage{iclr2027_conference,times}

\usepackage{amsmath,amsfonts,bm}

\def\eqref#1{equation~\ref{#1}}
\def\1{\bm{1}}

\DeclareMathAlphabet{\mathsfit}{\encodingdefault}{\sfdefault}{m}{sl}
\SetMathAlphabet{\mathsfit}{bold}{\encodingdefault}{\sfdefault}{bx}{n}

\usepackage{url}
\usepackage{adjustbox}
\usepackage{tcolorbox}
\usepackage{amsmath}
\usepackage{hyperref}
\usepackage{xfrac}
\usepackage{amssymb}
\usepackage[table]{xcolor}
\usepackage{nicematrix}
\usepackage{booktabs}
\usepackage{longtable}
\usepackage{multirow}
\usepackage{pifont}
\usepackage{wrapfig}
\usepackage{caption}
\usepackage{subcaption}
\usepackage{graphicx}
\iclrfinalcopy % Uncomment for camera-ready version, but NOT for submission.
\usepackage{capt-of}
\usepackage{float}
\usepackage{placeins}
\usepackage{algorithm}
\definecolor{darkgreen}{HTML}{1B5E20}

\newcommand{\cmark}{\ding{51}} % ✓
\newcommand{\xmark}{\ding{55}} % ✗
\newcommand{\best}[1]{\textcolor{red}{\textbf{#1}}}
\newcommand{\second}[1]{\textcolor{blue}{\textbf{#1}}}
\newcommand{\tblhead}{\rule[-3pt]{0pt}{12pt}}
\newcommand{\tblfour}{\rule[-3pt]{0pt}{12.5pt}}

\hypersetup{
  colorlinks=True,
  pdfborder={0 0 0},
  citecolor=blue,    % \cite 的颜色
}
\newtheorem{theorem}{Theorem}
\renewcommand{\bibsection}{%
    \section*{\refname}%
    \phantomsection
    \addcontentsline{toc}{section}{\refname}}

\title{Learning Skills from Historical Action Trajectories: Action Experience Dictionary for World Action Models}
\author{
\makebox[\textwidth][l]{%
\begin{minipage}{\textwidth}
\raggedright
\normalfont
\textbf{Qi Lyu}$^{1,*}$,
\textbf{Jiahua Dong}$^{2,*}$,
\textbf{Hao Shen}$^{3}$,
\textbf{Xudong Wang}$^{1}$,
\textbf{Hongyuan Yu}$^{4}$,
\textbf{Baichen Liu}$^{1,\dagger}$, \\
\textbf{Henghui Ding}$^{5}$,
\textbf{Zhi Han}$^{1}$,
\textbf{Nicu Sebe}$^{6}$,
\textbf{Ivan Laptev}$^{2}$,
\textbf{Fahad Shahbaz Khan}$^{2}$,
\textbf{Salman Khan}$^{2}$\\
$^{1}$Shenyang Institute of Automation, Chinese Academy of Sciences \\
$^{2}$Mohamed bin Zayed University of Artificial Intelligence \\
$^{3}$Anhui University~~
$^{4}$Xiaomi Corporation~~
$^{5}$Fudan University~~
$^{6}$University of Trento \\
$^{*}$Equal contributions~~
$^{\dagger}$Corresponding Author
\end{minipage}%
}
}

\begin{document}

\maketitle
\begin{abstract}
World Action Models (WAMs) couple visual dynamics prediction with action generation, yet they do not explicitly support the reuse of action experience across manipulation tasks. Furthermore, existing WAMs struggle to capture underlying cross-task semantic relationships that could guide target action prediction, as redundant background elements interfere with the extraction of key visual information. To address these challenges, we develop a novel Action Experience Dictionary (AED) that encodes historical physical action trajectories into shared action embeddings to support skill reuse and model cross-task relationships. 
Specifically, we first aggregate historical actions to align with visual observations and retrieve action embeddings from the AED using a pretrained action tokenizer. Subsequently, we visually condition the pooled embeddings through cross-attention and prepend them to noisy action tokens, providing interaction context and action intent for prediction. To model action-related motion and reduce reliance on irrelevant background cues, we introduce a motion-aware transition loss that supervises visual feature change prediction over random temporal intervals. Experiments on simulation benchmarks and in real-world cross-embodiment settings verify the effectiveness of our AED. The project website is available at \url{https://github.com/JiahuaDong/AED}.

\end{abstract}

\section{Introduction}
Rapid advances in large-scale foundation models~\citep{touvron2023llama,qwen3,wan2025,deepseekai2025deepseekv3technicalreport} have spurred interest in transferring visual and linguistic knowledge to physical robots. Vision-Language-Action (VLA) models~\citep{kim2024openvla,black2024pi0,bai2026flash,jia2026action} adapt pretrained vision-language representations to map observations and instructions to actions, but their direct policy formulations often leave environmental dynamics implicit. To exploit dynamics in video data, World Action Models (WAMs)~\citep{an2026feedback,lingbot-va2026,jia2026physics} couple action generation with future visual prediction, providing supervision on how environments evolve. Recent WAM advances span unified video-action architectures and efficient control pipelines.

However, existing WAMs \citep{chen2026lawam,yang2026lila} typically focus on learning task-specific behaviors, leaving the potential of cross-task semantic relationships to guide manipulation skill learning underexplored. In particular, such underlying relationships among manipulation tasks can help robots draw on action experience relevant to the target task, thereby improving manipulation performance.
As illustrated in Fig.~\ref{fig:motivation}(a), a robot learning to place a can into a basket can build on experience in grasping, transporting, and releasing objects acquired from other pick-and-place tasks (\emph{e.g.}, placing a wine bottle on a shelf or opening a drawer and placing a bowl inside). By adapting these behaviors to the basket's position and opening, the robot can learn the target manipulation task more effectively. Similarly, action experience gained from placing a can into a basket can also facilitate the learning of related actions in tabletop pick-and-place tasks. 
This practical example demonstrates that different tasks share reusable action experience despite their distinct goals and visual contexts, as depicted in Fig.~\ref{fig:motivation}(b). Nevertheless, simply retaining historical actions is insufficient to make this knowledge reusable, since similar motions can serve different purposes depending on the objects being manipulated and their spatial relationships. Moreover, redundant background elements may hinder the extraction of action-relevant visual information. 
These challenges lead us to the central question: 
\textit{How can WAMs~\citep{cai2026ahawam} use past action trajectories to model underlying relationships among tasks and facilitate skill learning for the target manipulation task?}

\begin{figure}[t]
\centering
\includegraphics[width=1.0\linewidth]{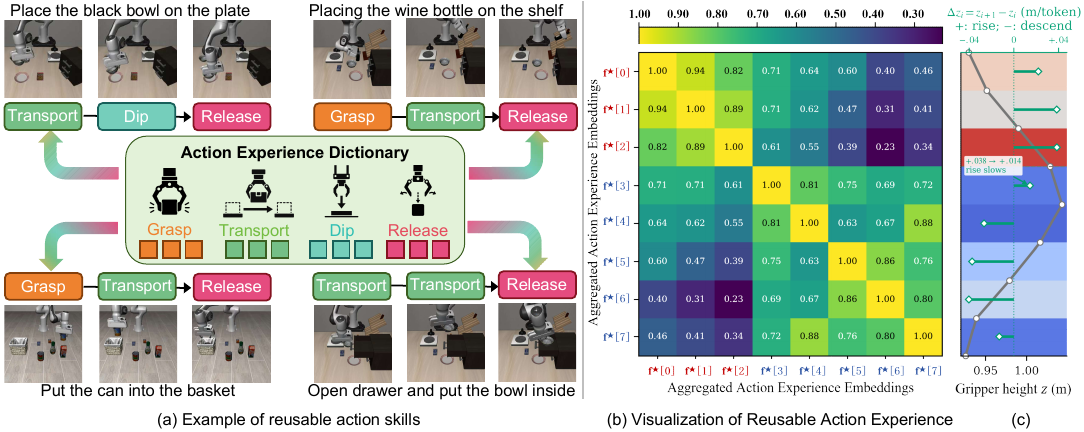}
\vspace{-7mm}
\caption{\textbf{(a)} Example of reusable action skills. 
\textbf{(b)} Visualization of reusable action experience in the action experience dictionary (AED). $\mathbf{f}^\star[0]\text{--}\mathbf{f}^\star[2]$ indicate the \texttt{transporting} action pattern, while $\mathbf{f}^\star[3]\text{--}\mathbf{f}^\star[7]$ encode the \texttt{dipping} action pattern in the AED. The visualization of the cosine similarities among action embeddings in the AED shows that learning to place the bowl on the plate involves substantial reuse of both the \texttt{transporting} and \texttt{dipping} action patterns. 
\textbf{(c)} \textcolor{gray}{Gray} and \textcolor{darkgreen}{green} lines show gripper height and its relative changes, respectively, during training to place the bowl on the plate, linking action patterns (e.g., \texttt{transporting} and \texttt{dipping}) to physical height. }
\vspace{-3mm}
\label{fig:motivation}
\end{figure}

To address the above challenges, we propose a novel learnable \underline{A}ction \underline{E}xperience \underline{D}ictionary (AED) that encodes historical manipulation trajectories as shared action embeddings for WAMs. \textbf{First}, we aggregate past physical actions over intervals aligned with visual observations and use a pretrained action tokenizer to retrieve task-relevant action embeddings from the AED. \textbf{Second}, the retrieved embeddings are pooled into compact temporal representations and conditioned on compressed historical visual features through cross-attention, enabling the resulting representations to capture both interaction context and action intent. Then, we prepend these visually conditioned action embeddings to the noisy action tokens, enabling the action expert to exploit inter-task relationships when predicting subsequent action chunks. \textbf{Third}, we introduce a motion-aware transition loss that supervises the prediction of visual feature changes over randomly sampled temporal intervals using the visual features at the start of each interval and the corresponding AED-conditioned hidden states. This loss encourages the retrieved action embeddings in AED to capture action-related motion while reducing reliance on irrelevant background. \textbf{Finally}, we evaluate the effectiveness of the proposed model by comparing it with baselines in simulation on LIBERO, RoboTwin, and LIBERO-Plus, as well as in real-world cross-embodiment experiments. The main contributions are listed below:
\begin{itemize}
\item We propose a novel learnable Action Experience Dictionary (AED) that encodes historical action experience into visually conditioned action embeddings, enabling our model to leverage underlying relationships among manipulation tasks to guide action prediction.

\item We incorporate action-relevant visual information into the action embeddings in the AED to obtain visually conditioned action embeddings, which combine task-related visual semantics with action experience to capture both task context and action intent.

\item We introduce a motion-aware transition loss that supervises visual feature change prediction over random temporal intervals, encouraging learned action embeddings in the AED to capture action-related motion and reducing reliance on irrelevant background cues.

\end{itemize}

\begin{figure}[t]
    \centering
    \includegraphics[width=1.0\linewidth]{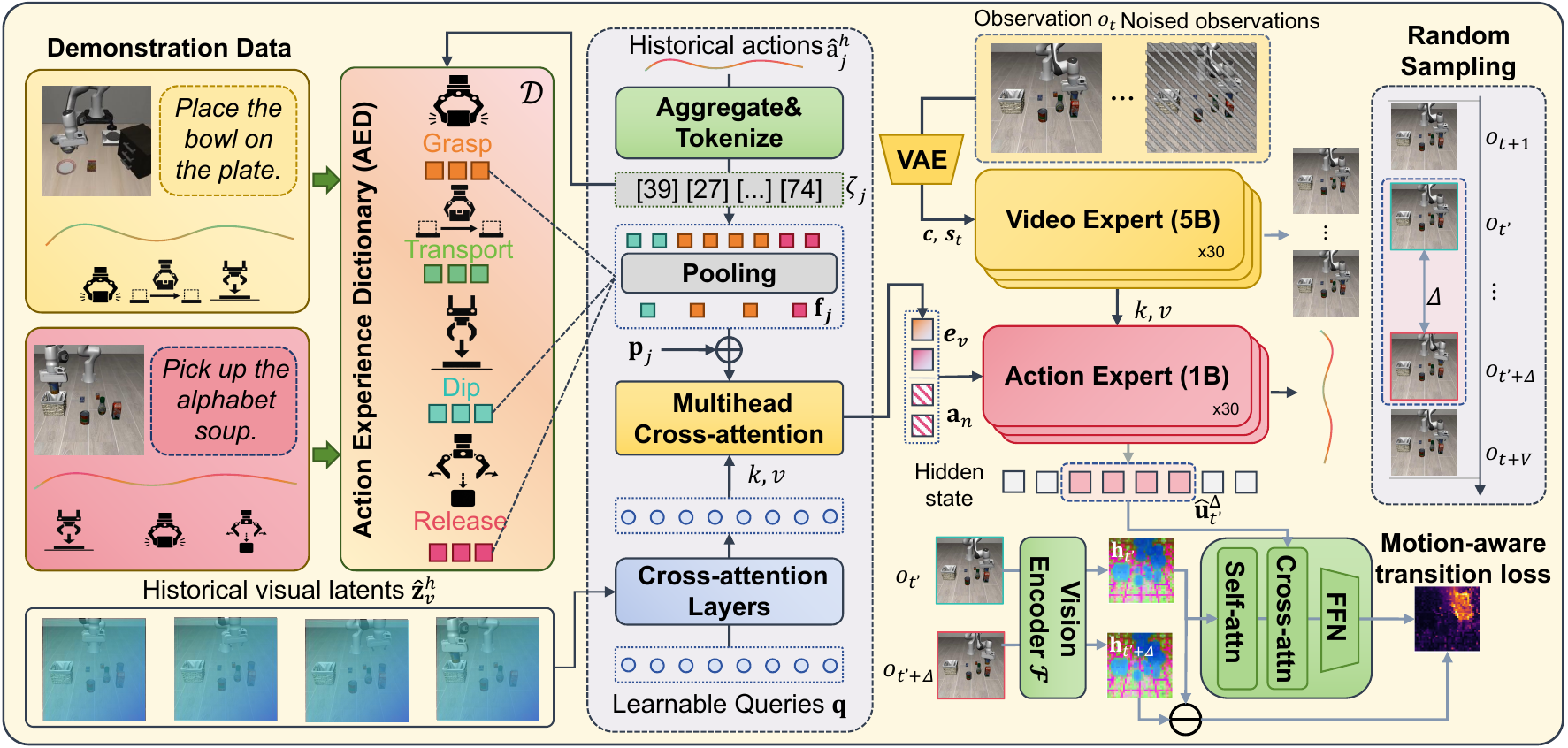}
    \vspace{-7mm}
    \caption{Overview of the proposed AED. After defining learnable action experience dictionary (AED) shared across tasks, we utilize the visually conditioned action embeddings to encode the task-relevant information and employ a motion-aware transition loss to encode action-relevant motion.
    }
    \vspace{-5mm}
    \label{fig:main_fig}
\end{figure}

\vspace{-2mm}
\section{Related work}
\vspace{-2mm}
\label{sec:related_work}

\textbf{Vision Language Action (VLA):}
Recent VLAs increasingly explore structured action representations and
visual dynamics for transferable robot control. FAST
\citep{pertsch2025fast} introduces frequency-space action tokenization for
efficient VLA training, while UniVLA \citep{bu2025univla} and ViPRA
\citep{routray2026vipra} learn task- or motion-centric latent actions from
heterogeneous videos to support transferable control. VLM2VLA
\citep{hancock2026vlm2vla} represents robot actions in a language-compatible
form to preserve pretrained VLM capabilities. DeFI \citep{zhang2026defi}
decouples forward visual dynamics and inverse action learning. However,
cross-task reuse of historical action experience remains underexplored.
Unlike UniVLA and ViPRA, which primarily learn transferable latent action spaces, our method explicitly visually conditions the retrieved action embeddings from the AED, and supervises visual feature change prediction over random temporal intervals to capture action-related motion and facilitate skill patterns reuse across tasks.

\textbf{World Action Models (WAMs):}
WAMs~\citep{WM_Survey} exploit visual prediction to capture physical and temporal structure for robot control. UniPi \citep{du2023unipi} performs planning through text-conditioned video generation and action extraction, while GR-2 \citep{cheang2024gr2} and DreamGen \citep{jang2025dreamgen} leverage video generative priors for manipulation. More recent WAMs couple visual dynamics and
action generation more directly: UWM \citep{zhu2025uwm} jointly models video and action diffusion, Motus \citep{bi2025motus} integrates understanding, video, and action experts, while LingBot-VA \citep{lingbot-va2026} and FastWAM \citep{yuan2026fastwam} shows that video
co-training can benefit control without test-time video generation.
Unlike existing WAMs that primarily learn task-specific behaviors through visual dynamics supervision, our method explicitly encodes historical action trajectories into shared action embeddings from AED to model underlying cross-task relationships, while using motion-aware transition supervision to emphasize action-related visual changes.

\section{Methodology}
\label{sec:method}

\subsection{Preliminaries}
\label{sec:method_setup}
\vspace{-2mm}
In embodied manipulation tasks~\citep{lingbot-va2026,palme}, robots aim to autonomously plan and make decisions based on task instructions $\mathbf{c}$ and visual observations $\mathbf{o}_t$ at time $t$. Following FastWAM~\citep{yuan2026fastwam}, we adopt flow matching to train control policies with both video and action experts. Let $\mathbf{x}_e$ ($e\in\{v,a\}$) denote the clean target, where $\mathbf{x}_v$ represents future video and $\mathbf{x}_a$ represents an action chunk. Given a Gaussian noise sample $\boldsymbol{\epsilon}\sim\mathcal{N}(\mathbf{0},\mathbf{I})$ and a flow time $\tau\in(0,1)$, we construct $\mathbf{x}_e^\tau=(1-\tau)\mathbf{x}_e+\tau\boldsymbol{\epsilon}$. Accordingly, the flow-matching objective $\mathcal{L}_{\mathrm{FM}}$ is defined as:

\begin{equation}
\mathcal L_{\mathrm{FM}} = \sum_{e\in\{v, a\}}\lambda_e\mathcal{L}_e;~~ 
    \mathcal{L}_e=\mathbb E_{\mathbf {x}_e,\boldsymbol\epsilon,\tau}\left[\left\|\pi_\theta(\mathbf{x}^{\tau}_e \mid \mathbf{o}_t, \mathbf{c}, \mathbf{s}_t)-(\boldsymbol\epsilon-\mathbf{x}_e)\right\|_2^2\right],
    \label{eq:flow_matching_loss}
\end{equation}
where $\pi_\theta(\cdot)$ denotes the policy parameterized by $\theta$, $\mathbf{s}_t$ is the robot's proprioceptive state at time $t$, and $\lambda_e=1.0$ is the balancing factor.
For joint training, we use $\pi_\theta(\cdot)$ to predict future video latents $\mathbf{z}_{t+1:t+V}$ corresponding to $V$ frames and an action chunk $\mathbf{a}_{t:t+H-1}$ of horizon $H$ to be executed.

\subsection{Action Experience Dictionary (AED)}
\label{sec:method_memory}
Generally, underlying relationships among manipulation tasks can provide valuable guidance for learning related skills. For example, when learning to place a cup on a shelf, a robot can draw on experience acquired from tabletop pick-and-place, such as grasping, transporting, and releasing objects, while adapting these behaviors to the target task. Conversely, experience gained from shelf placement can also benefit other tasks that involve these shared skills, enabling mutual knowledge transfer across related tasks. However, even during joint training, existing WAMs~\citep{yuan2026fastwam, lingbot-va2026, kim2026cosmospolicy} often learn task-specific behaviors without explicitly leveraging inter-task relationships, overlooking the potential of cross-task semantic connections to facilitate the learning of manipulation skills. This limitation motivates us to investigate \textbf{\textit{how to leverage reusable action experience shared across related tasks to improve manipulation performance. }}

To address the above limitation, we develop a novel learnable action experience dictionary (AED) to learn manipulation skills from historical action trajectories, as depicted in Fig.~2. Specifically, the proposed AED encodes historical physical action trajectories as a sequence of action embeddings to capture latent relationships across skills. Subsequently, for each training batch, we retrieve action embeddings that are highly relevant to the target task from the AED. These action embeddings are aggregated and then conditioned on the corresponding temporal visual embeddings through transformer blocks with cross-attention. Finally, we prepend the resulting action embeddings to the noisy action tokens within the action expert for prediction. This prepending strategy enables the model to exploit inter-task relationships for subsequent skill learning. 
To further encode action-relevant visual information into AED, we propose a motion-aware transition (MT) loss predicting temporal visual feature changes from the sampled observation and AED-guidance action hidden states.

\noindent {\color{black} \large $\triangleright$} \textbf{Construction of Action Experience Dictionary:}
Let $\mathcal{D}\in\mathbb{R}^{N\times d_a}$ denote a learnable Action Experience Dictionary (AED) containing $N$ randomly initialized action embeddings, each of dimension $d_a$. Notably, an action skill, such as grasping an object placed on a tabletop or inside a container, may comprise multiple action patterns with distinct motion characteristics, including approach direction, gripper height, displacement, and speed. 
Accordingly, each action embedding in the AED represents a specific action pattern, whereas a set of related action embeddings jointly characterizes the action skill.
During training, the proposed AED is shared across all manipulation tasks to facilitate the reuse of action experience, thereby leveraging cross-task relationships to benefit the target task. 
To encode $\mathcal{D}$, we use a pretrained action tokenizer $\Phi$ to convert historical physical action trajectories $\mathbf{a}^h\in\mathbb{R}^{H\times d_f}$ into a sequence of indices for retrieving task-relevant action embeddings from $\mathcal{D}$, where $d_f$ denotes the number of degrees of freedom. 
However, directly tokenizing these trajectories, each consisting of $H$ actions, results in a temporal frequency mismatch with the $V$ visual observations and incurs additional computational costs. To tackle this issue, we aggregate historical trajectories to obtain $\widehat{\mathbf{a}}^h\in\mathbb{R}^{V\times d_f}$, and define the $j$-th aggregated action $\widehat{\mathbf{a}}^h[j] \in\mathbb{R}^{d_f}$ as: 
\begin{equation}
\widehat{\mathbf{a}}^{h}[j] =
    \sum_{r=1}^{k}\mathbf{a}^{h}[(j-1)k+r]\odot\mathbf{m}[(j-1)k+r]
    +
    \mathbf{a}^{h}[jk]\odot(\mathbf{1}-\mathbf{m}[jk]),~~\forall j=1, \cdots, V,
    \label{eq:history_action_grouping}
\end{equation}
where $k\!=\!\frac{H}{V}$ is the number of historical actions between two consecutive observations, and $\mathbf{m}\in\{0,1\}^{H\times d_f}$ denotes a binary mask whose entries are set to $1$ for the arm control dimensions and $0$ for the gripper dimensions. Here, $\mathbf{m}[(j{-}1)k{+}r]$ and $\mathbf{m}[jk]$ represent the $((j{-}1)k{+}r)$-th and $jk$-th rows of $\mathbf{m}$, respectively. The same indexing convention applies to $\mathbf{a}[(j{-}1)k{+}r]$ and $\mathbf{a}[jk]$. In Eq.~(\ref{eq:history_action_grouping}), we accumulate the arm commands to summarize the motion executed over the interval of observations while retaining the final gripper command to preserve the gripper's terminal state.

Subsequently, we adopt $\Phi$ to map the $j$-th aggregated action $\widehat{\mathbf{a}}^h[j]$ to a sequence of indices $\zeta_j = \Phi(\widehat{\mathbf{a}}^h[j]) \in\mathbb{R}^{M}$, where $M$ denotes the number of retrieved action embeddings. Each index in $\zeta_j$ is then used to retrieve the corresponding action embedding from $\mathcal{D}$. These embeddings are averaged to obtain a compact action representation $\mathbf{f}_j \in\mathbb{R}^{d}$ for the $j$-th ($j=1,\ldots, V$) aggregated action $\widehat{\mathbf{a}}_j^h$:
\begin{align}
\mathbf{f}_j =
\mathbf{p}_j + \frac{1}{M} \sum_{l=1}^{M} \mathcal{D}(\zeta_j[l]), 
\label{eq}
\end{align}
where $\zeta_j[l]\in\mathbb{R}$ denotes the $l$-th index of $\zeta_j$, and $\mathcal{D}(\zeta_j[l])\in\mathbb{R}^{d}$ represents the $\zeta_j[l]$-th row of $\mathcal{D}$. $\mathbf{p}_j \in \mathbb{R}^{d_a}$ denotes the temporal positional encoding for the $j$-th action trajectory. Afterwards, we stack $\{\mathbf{f}_j\}_{j=1}^{V}$ in temporal order to obtain aggregated action embeddings $\mathbf{f}^\star \in \mathbb{R}^{V\times d_a}$:
\begin{equation}
\mathbf{f}^\star = [\mathbf{f}_1, \mathbf{f}_2, \ldots, \mathbf{f}_V].
\label{eq:selected_action_embeddings}
\end{equation}

\noindent {\color{black} \large $\triangleright$} \textbf{Visually Conditioned Action Embeddings:}
The action embeddings $\mathbf{f}^\star$ obtained in Eq.~(\ref{eq:selected_action_embeddings}) encode only the action patterns themselves (e.g., grasping and moving patterns), without capturing semantic information about task-relevant objects or their relationships with the target manipulation task. 
This may result in an incomplete understanding of the task context and action intent. To this end, we condition the action embeddings $\mathbf{f}^\star$ on visual information. Specifically, the historical visual latent embeddings comprise $R$ spatiotemporal patch tokens $\mathbf{z}_v^h \in \mathbb{R}^{R \times d_v}$, where $d_v$ denotes the dimension of visual embeddings. Since many of these tokens correspond to background content or content irrelevant to the interaction, directly feeding them into the action expert introduces redundancy and causes the input sequence length to grow with the observation horizon. Therefore, we compress $\mathbf{z}_v^h$ into a fixed number of latent visual tokens $\widehat{\mathbf{z}}_v^h \in \mathbb{R}^{L \times d_a}$ using $L$ learnable queries $\mathbf{q}\in\mathbb{R}^{L\times d_a}$:
\begin{equation}
\widehat{\mathbf{z}}_v^h=
\left[
\mathcal{A}_1
\oplus
\mathcal{A}_2
\oplus
\cdots
\oplus
\mathcal{A}_\psi
\right]
\mathbf{w}_o,
\;
\mathcal{A}_i
=
\sigma \!
(
\frac{
\mathbf{q}\mathbf{w}_q
\left(\mathbf{z}_v^h\mathbf{w}_k\right)^\top
}{
\sqrt{d_a/\psi}
})
(\mathbf{z}_v^h\mathbf{w}_v),
\; \forall i=1, \ldots, \psi,
\end{equation}
where $\oplus$ denotes concatenation along the feature dimension, $\psi$ is the number of cross-attention heads, and $\mathcal{A}_i \in\mathbb{R}^{L\times (d_a/\psi)}$ represents the $i$-th ($i=1, \ldots, \psi$) attention head. $\sigma(\cdot)$ is the softmax function. For each attention head, $\mathbf{w}_q \in \mathbb{R}^{d_a\times ({d_a}/{\psi})}$, $\mathbf{w}_k \in \mathbb{R}^{d_v\times ({d_a}/{\psi})}$, and $\mathbf{w}_v \in \mathbb{R}^{d_v\times ({d_a}/{\psi})}$ denote the linear projection matrices for the query, key, and value. Furthermore, $\mathbf{w}_o \in \mathbb{R}^{d_a\times d_a}$ indicates the output projection matrix used to fuse the features from all attention heads.

To incorporate the task-relevant visual context encoded in $\widehat{\mathbf{z}}_v^h$ into $\mathbf{f}^\star$ while preserving their action semantics, we fuse them using a dictionary encoder $\mathcal{E}$, implemented as a one-layer Transformer encoder with cross-attention. Here, $\mathbf{f}^\star$ serves as the query, while $\widehat{\mathbf{z}}_v^h$ provides the keys and values.
Using cross-attention, $\mathcal{E}$ generates visually conditioned action embeddings $\mathbf{e}_v \in \mathbb{R}^{V\times d_a}$, which are then concatenated with the noisy action tokens $\mathbf{a}_n \in \mathbb{R}^{H\times d_a}$ to obtain $\mathbf{a}^\star \in \mathbb{R}^{(H+V)\times d_a}$:
\begin{equation}
\begin{aligned}
\mathbf{a}^\star = [\mathbf{e}_v; \mathbf{a}_n], \quad \mathbf{e}_v = \mathcal{E}(\mathbf{f}^\star, \widehat{\mathbf{z}}_v^h),
\end{aligned}
\label{eq:action_prefix}
\end{equation}
where $[\cdot; \cdot]$ denotes concatenation along the token dimension. After concatenation, we feed $\mathbf{a}^\star$ to the action expert for action chunk prediction. Using the formulation in Eq.~(\ref{eq:action_prefix}), we integrate the task-relevant action experience retrieved from $\mathcal{D}$ and the associated visual information into the historical context, thereby enriching the context available for subsequent action prediction.

\noindent {\color{black} \large $\triangleright$} \textbf{Motion-Aware Transition Loss:}
Although visual conditioning incorporates historical scene information into the retrieved action embeddings for action prediction via Eq.~(\ref{eq:action_prefix}), the historical observations contain action-relevant objects and background content, and the flow-matching objective does not explicitly distinguish visual cues associated with action-dependent motion from incidental scene appearance. Thus, the resulting action embeddings in the AED $\mathcal{D}$ may retain background correlations that are less useful when reusing action experience across tasks. To encourage using action-relevant visual information, as illustrated in Fig.~\ref{fig:main_fig}, we develop a motion-aware transition (MT) loss that predicts temporal visual changes from the starting state and AED-conditioned action hidden states. Relatively stable background components can partially cancel in the feature difference, providing a supervision signal highlighting observable changes. Conditioning this prediction on action hidden states encourages the representations to capture object motion associated with the actions, helping reduce reliance on irrelevant background cues during action chunk prediction.

During training at time $t$, we randomly sample a future time step $t'\in\{t+1,\ldots,t+V-1\}$ and use a visual encoder $\mathcal{F}$ (e.g., LingBot-Vision~\citep{lingbot-vision2026} or DINOv3~\citep{simeoni2025dinov3}) to extract latent features $\mathbf{h}_{t'}=\mathcal{F}(\mathbf{o}_{t'})\in\mathbb{R}^{B\times d_z}$ from the future observation $\mathbf{o}_{t'}$. Here $B$ is the number of latent features and $d_z$ is the dimensionality of the latent features. We then sample a time window $\Delta \sim \operatorname{Unif}\{1,2,\ldots,t+V-t'\}$, where $\operatorname{Unif}$ is the discrete uniform distribution. 
After extracting the final-layer hidden states $\mathbf{u}_{t'}^\Delta \in \mathbb{R}^{\Delta\times d_a}$ from the action expert over the interval from $t'$ to $t'+ \Delta$, we propose the MT loss $\mathcal{L}_{\mathrm{MT}}$ that emphasizes visual objects whose motion is associated with the actions, thereby reducing the influence of irrelevant background cues on action chunk prediction:
\begin{equation}
\mathcal{L}_{\mathrm{MT}}
=
\frac{1}{Bd_z}
\left\|
\Delta\widehat{\mathbf{h}}_{t'}
-
\Delta\mathbf{h}_{t'}
\right\|_F^2, 
~~\Delta\widehat{\mathbf{h}}_{t'} =
\mathcal{G}(\mathbf{h}_{t'}, \mathbf{u}_{t'}^{\Delta}),
~~\Delta \mathbf{h}_{t'} =
\mathbf{h}_{t'+\Delta} - \mathbf{h}_{t'}, 
\label{eq:transition_loss}
\end{equation}
where $\Delta\widehat{\mathbf{h}}_{t'}\in\mathbb{R}^{B\times d_z}$ is the predicted visual feature change from time $t'$ to $t'+\Delta$. It is predicted using a three-layer predictor $\mathcal{G}$ with cross-attention between $\mathbf{u}^{\Delta}_{t'}$ and $\mathbf{h}_{t'}$. Here, cross-attention uses $\mathbf{h}_{t'}$ as the query and $\mathbf{u}^{\Delta}_{t'}$ as the keys and values. $\Delta\mathbf{h}_{t'}\in\mathbb{R}^{B\times d_z}$ is the ground-truth change in visual features from time $t'$ to $t'+\Delta$, and $\mathbf{h}_{t'+\Delta} = \mathcal{F}(\mathbf{o}_{t'+\Delta})$ is the latent representation of the future observation $\mathbf{o}_{t'+\Delta}$. Since the encoding of hidden state $\mathbf{u}_{t'}^\Delta$ incorporates the relevant action embeddings from $\mathcal{D}$, optimizing Eq.~(\ref{eq:transition_loss}) encourages $\mathcal{D}$ to capture action-related visual changes, focus on action-relevant objects, and reduce its reliance on irrelevant background during action prediction.

\begin{tcolorbox}[colback=green!3, colframe=black, boxrule=0.2 mm, boxsep=1pt, left=2pt, right=2pt, top=2pt, bottom=2pt, before skip=1pt, after skip=1pt]
\footnotesize
\setlength{\abovedisplayskip}{2pt}
\setlength{\belowdisplayskip}{2pt}
\setlength{\abovedisplayshortskip}{1pt}
\setlength{\belowdisplayshortskip}{1pt}
\begin{theorem}[Temporal Composition Error Bound]
\label{theorem1}
Let $V\geq3$ and assume consistent endpoint features from the same trajectory, with interval selection independent of the trajectory and auxiliary randomness. For any fixed parameters $\theta$ with finite transition risk, the sampled loss in Eq.~(\ref{eq:transition_loss}) is an unbiased estimator of the sampling-weighted risk $\mathcal R_{\mathrm{MT}}(\theta)$ defined below. The corresponding composition risk satisfies
\begin{equation}
    \mathcal C_{\mathrm{MT}}(\theta)
    \leq (V-1)(3V-4)\mathcal R_{\mathrm{MT}}(\theta).
    \label{eq:mt_random_composition_bound}
\end{equation}
Thus, vanishing expected MT loss implies vanishing mean-square composition error over the future intervals.
\end{theorem}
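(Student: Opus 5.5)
The plan is to make both risks explicit as finite weighted sums over index pairs. The bound then follows from a telescoping identity for the ground-truth feature changes, Cauchy--Schwarz, and a comparison between the sampling weights and the weights in the composition risk. The definitions of $\mathcal R_{\mathrm{MT}}$ and $\mathcal C_{\mathrm{MT}}$ are not reproduced above, so the forms of $\mathcal C_{\mathrm{MT}}$ and of the cancellation used below are my assumptions.

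\textbf{Unbiasedness.} Write $t'=t+i$ with $i\in\{1,\ldots,V-1\}$, drawn uniformly. Given $i$, $\Delta$ is uniform on $\{1,\ldots,V-i\}$, so the pair $(i,\Delta)$ has probability $\pi(i,\Delta)=\frac{1}{(V-1)(V-i)}$. Let
\[
e(i,\Delta)=\mathbb E\Big[\tfrac{1}{Bd_z}\big\|\mathcal G(\mathbf h_{t+i},\mathbf u^{\Delta}_{t+i})-(\mathbf h_{t+i+\Delta}-\mathbf h_{t+i})\big\|_F^2\Big],
\]
where the expectation is over trajectories and auxiliary randomness (flow noise, $\tau$). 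I take $\mathcal R_{\mathrm{MT}}(\theta)=\sum_{i,\Delta}\pi(i,\Delta)\,e(i,\Delta)$. The interval selection is independent of the trajectory and of the auxiliary randomness, and the transition risk is finite. Hence the tower property (Fubini over a finite index set) gives that the expectation of the sampled loss in Eq.~(\ref{eq:transition_loss}) equals $\mathcal R_{\mathrm{MT}}(\theta)$. This step is routine.

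\textbf{Composition step.} For a future interval $(s,u)$ with an intermediate split $m$, the composition error is the discrepancy $\Delta\widehat{\mathbf h}(s,u)-\Delta\widehat{\mathbf h}(s,m)-\Delta\widehat{\mathbf h}(m,u)$. The consistent-endpoint assumption means the feature $\mathbf h_m$ used as the end of $(s,m)$ is the same as the start of $(m,u)$. The true changes therefore telescope:
\[
\Delta\mathbf h(s,u)=\Delta\mathbf h(s,m)+\Delta\mathbf h(m,u).
\]
Subtracting this zero quantity from the discrepancy and applying $\|a+b+c\|^2\le 3(\|a\|^2+\|b\|^2+\|c\|^2)$ bounds each composition term by $3\,[e(s,u)+e(s,m)+e(m,u)]$. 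This factor $3$ is the source of the leading $3$ in $(V-1)(3V-4)$.

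\textbf{Weight comparison (main obstacle).} Averaging over the triples that define $\mathcal C_{\mathrm{MT}}$ gives $\mathcal C_{\mathrm{MT}}\le\sum_{(i,\Delta)} c(i,\Delta)\,e(i,\Delta)$. Here $c(i,\Delta)$ counts, with the composition weights, how often each interval appears as a full interval or as a sub-interval. It then suffices to show $c(i,\Delta)\le (V-1)(3V-4)\,\pi(i,\Delta)$ for every pair, that is, $c(i,\Delta)\le\frac{3V-4}{V-i}$.

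The sampling weight is smallest for early starts ($i=1$, weight $\frac{1}{(V-1)^2}$), so the crude estimate $e\le (V-1)^2\pi^{-1}\cdots$ alone yields a constant of order $3(V-1)^2$. The sharper constant $3(V-1)^2-(V-1)$ must come from a careful count. The useful facts are that intervals starting at $i=1$ can only appear as the full interval or as a left sub-interval, and that the full-interval term and the sub-interval terms do not all hit the worst weight at once. I would first carry out this count explicitly for $V=3$, where the target constant is $10$, to pin down the exact normalization of $\mathcal C_{\mathrm{MT}}$. I would then extend the count by summing over the split points $m$.

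\textbf{Conclusion.} The final claim is immediate from the bound: as $\mathcal R_{\mathrm{MT}}(\theta)\to 0$, the right-hand side of Eq.~(\ref{eq:mt_random_composition_bound}) forces $\mathcal C_{\mathrm{MT}}(\theta)\to 0$, since $V$ is fixed.
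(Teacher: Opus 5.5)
Your unbiasedness argument matches the paper's. The composition part has a genuine gap, in two places.

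\textbf{The form of the composition risk.} In the paper it is an expected worst case, not an average:
\[
\mathcal C_{\mathrm{MT}}(\theta)=\mathbb E_X\Bigl[\max_{t+1\le a<b<c\le t+V}\|\mathbf r_{a:b:c}\|_F^2/(Bd_z)\Bigr].
\]
You plan to bound each triple by per-interval expectations $e(i,\Delta)$ and then count how often each interval occurs. That cannot control an expectation of a maximum. What is needed is a bound that holds pointwise for each fixed $X$ and is uniform over triples. Its right-hand side must not depend on the triple, namely
\[
\bar\ell_{\mathrm{MT}}(X;\theta)=\sum_{(a,b)}p_{a:b}\,\ell_{a:b}(X;\theta),
\]
where $\mathbf r_{a:b}=\Delta\widehat{\mathbf h}_{a:b}-(\mathbf h_b-\mathbf h_a)$, $\ell_{a:b}=\|\mathbf r_{a:b}\|_F^2/(Bd_z)$, and $p_{a:b}=1/[(V-1)(t+V-a)]$ is your $\pi$. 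Only then can you take the maximum and the expectation.

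\textbf{The unweighted step is too lossy.} The inequality $\|x+y+z\|^2\le 3(\|x\|^2+\|y\|^2+\|z\|^2)$ cannot reach the stated constant. The leading $3$ in $3V-4$ does not come from it. Pointwise it gives at best
\[
3\max\{p_{a:b}^{-1},p_{b:c}^{-1},p_{a:c}^{-1}\}\,\bar\ell_{\mathrm{MT}}=3(V-1)^2\,\bar\ell_{\mathrm{MT}}
\]
when $a=t+1$. For $V=3$ there is only one triple, so averages and maxima coincide and no counting helps: this route yields $12$, not $10$.

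\textbf{The missing idea} is a weighted Cauchy--Schwarz with the sampling probabilities as weights. Using the telescoping identity $\mathbf r_{a:b:c}=\mathbf r_{a:b}+\mathbf r_{b:c}-\mathbf r_{a:c}$,
\[
\|\mathbf r_{a:b:c}\|_F^2\le\kappa_{a:b:c}\bigl(p_{a:b}\|\mathbf r_{a:b}\|_F^2+p_{b:c}\|\mathbf r_{b:c}\|_F^2+p_{a:c}\|\mathbf r_{a:c}\|_F^2\bigr),\qquad \kappa_{a:b:c}=p_{a:b}^{-1}+p_{b:c}^{-1}+p_{a:c}^{-1}.
\]
The bracket is at most $Bd_z\,\bar\ell_{\mathrm{MT}}(X;\theta)$, because $\bar\ell_{\mathrm{MT}}$ contains these three terms plus other nonnegative ones. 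Since $p_{a:b}^{-1}=(V-1)(t+V-a)$,
\[
\kappa_{a:b:c}=(V-1)\bigl[3V-2(a-t)-(b-t)\bigr]\le(V-1)(3V-4),
\]
with the maximum at $a=t+1$, $b=t+2$.

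The saving of $V-1$ relative to $3(V-1)^2$ comes from your own observation that an interval starting at $t+1$ is never the right sub-interval, i.e.\ $b\ge t+2$. It enters through $p_{b:c}^{-1}\le (V-1)(V-2)$, not through any count over triples. Taking the maximum over triples and then $\mathbb E_X$ gives the bound.
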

\end{tcolorbox}

\textbf{Proof.}
Let $X$ contain a trajectory and the auxiliary randomness defining its interval predictions, and let $\mathcal I=\{(a,b):t+1\leq a<b\leq t+V\}$. For $(a,b)\in\mathcal I$, define $\Delta\widehat{\mathbf h}_{a:b}=\mathcal G_\theta(\mathbf h_a,\mathbf u_a^{b-a})$, $\mathbf r_{a:b}=\Delta\widehat{\mathbf h}_{a:b}-(\mathbf h_b-\mathbf h_a)$, and $\ell_{a:b}(X;\theta)=\|\mathbf r_{a:b}\|_F^2/(Bd_z)$. The sampling distribution satisfies $p_{a:b}=1/[(V-1)(t+V-a)]\geq1/(V-1)^2$. Define $\bar\ell_{\mathrm{MT}}(X;\theta)=\sum_{(a,b)\in\mathcal I}p_{a:b}\ell_{a:b}(X;\theta)$ and $\mathcal R_{\mathrm{MT}}(\theta)=\mathbb E_X[\bar\ell_{\mathrm{MT}}(X;\theta)]$. For the sampled interval $I=(t',t'+\Delta)$, $\mathbb E_I[\ell_I(X;\theta)|X]=\bar\ell_{\mathrm{MT}}(X;\theta)$, and taking expectation over $X$ proves unbiasedness. 
For $t+1\leq a<b<c\leq t+V$, define $\mathbf r_{a:b:c}=\Delta\widehat{\mathbf h}_{a:b}+\Delta\widehat{\mathbf h}_{b:c}-\Delta\widehat{\mathbf h}_{a:c}$. Since the true feature differences telescope, $\mathbf r_{a:b:c}=\mathbf r_{a:b}+\mathbf r_{b:c}-\mathbf r_{a:c}$. Let $\kappa_{a:b:c}=p_{a:b}^{-1}+p_{b:c}^{-1}+p_{a:c}^{-1}$. Weighted Cauchy--Schwarz gives
\begin{equation}
\frac{\|\mathbf r_{a:b:c}\|_F^2}{Bd_z}
\leq
\kappa_{a:b:c}\bar\ell_{\mathrm{MT}}(X;\theta)
\leq
(V-1)(3V-4)\bar\ell_{\mathrm{MT}}(X;\theta).
\label{eq:mt_composition_pointwise}
\end{equation}
Indeed, $\kappa_{a:b:c}=(V-1)[3V-2(a-t)-(b-t)]\leq(V-1)(3V-4)$. Finally, define
$\mathcal C_{\mathrm{MT}}(\theta)=\mathbb E_X\!\left[\max_{t+1\leq a<b<c\leq t+V}\|\mathbf r_{a:b:c}\|_F^2/(Bd_z)\right]$.
Taking the maximum in Eq.~(\ref{eq:mt_composition_pointwise}) and then expectation over $X$ proves Eq.~(\ref{eq:mt_random_composition_bound}).
Theorem~\ref{theorem1} shows that the MT loss with randomly sampled intervals controls the temporal composition error of visual feature change predictions, providing a theoretical basis for consistent supervision across temporal scales. Lower compounding errors enable the model to better ignore extraneous disturbances that accumulate over long temporal horizons, such as changes in task-irrelevant objects arising from viewpoint shifts, manipulator motion, and incidental scene dynamics during task execution. Since these predictions depend on AED-conditioned action hidden states, AED learns the action experience by focusing on action-relevant visual information across different interaction stages and temporal scales through backpropagation.

\subsection{Training and Inference}
\label{sec:objective_inference}
\textbf{Training:}
We jointly train the video and action branches with the flow-matching objective $\mathcal{L}_{\mathrm{FM}}$ and the motion-aware transition loss $\mathcal{L}_{\mathrm{MT}}$. Therefore, the overall optimization $\mathcal{L}$ is defined as follows:
\begin{equation}
    \mathcal{L} = \mathcal L_{\mathrm{FM}} + \lambda_{m}\mathcal{L}_{\mathrm{MT}},
\end{equation}
where $\lambda_m=0.01$ denotes the balancing weight. For $\mathcal{L}_{\mathrm{FM}}$, we set $\lambda_e=1$ ($e\in\{v, a\}$) in Eq.~(\ref{eq:flow_matching_loss}).

\textbf{Inference:}
Video and action latents are initialized with Gaussian noise $\boldsymbol{\epsilon}\sim\mathcal{N}(\mathbf{0},\mathbf{I})$ and denoised using flow velocities predicted from the observation $\mathbf{o}_t$, proprioceptive state $\mathbf{s}_t$, task instruction $\mathbf{c}$, and AED embeddings $\mathbf{f}^\star$, with $K$ Euler steps of the flow-matching ordinary differential equation (ODE) yielding an action chunk of length $H$.
Deployment requires neither decoding the video latents into pixel-space frames nor evaluating the motion-aware transition predictor.

\vspace{-2mm}
\section{Experiments}
\label{sec:experimental_setup}

\begin{figure}[t]
    \centering
    \includegraphics[width=0.98\linewidth]{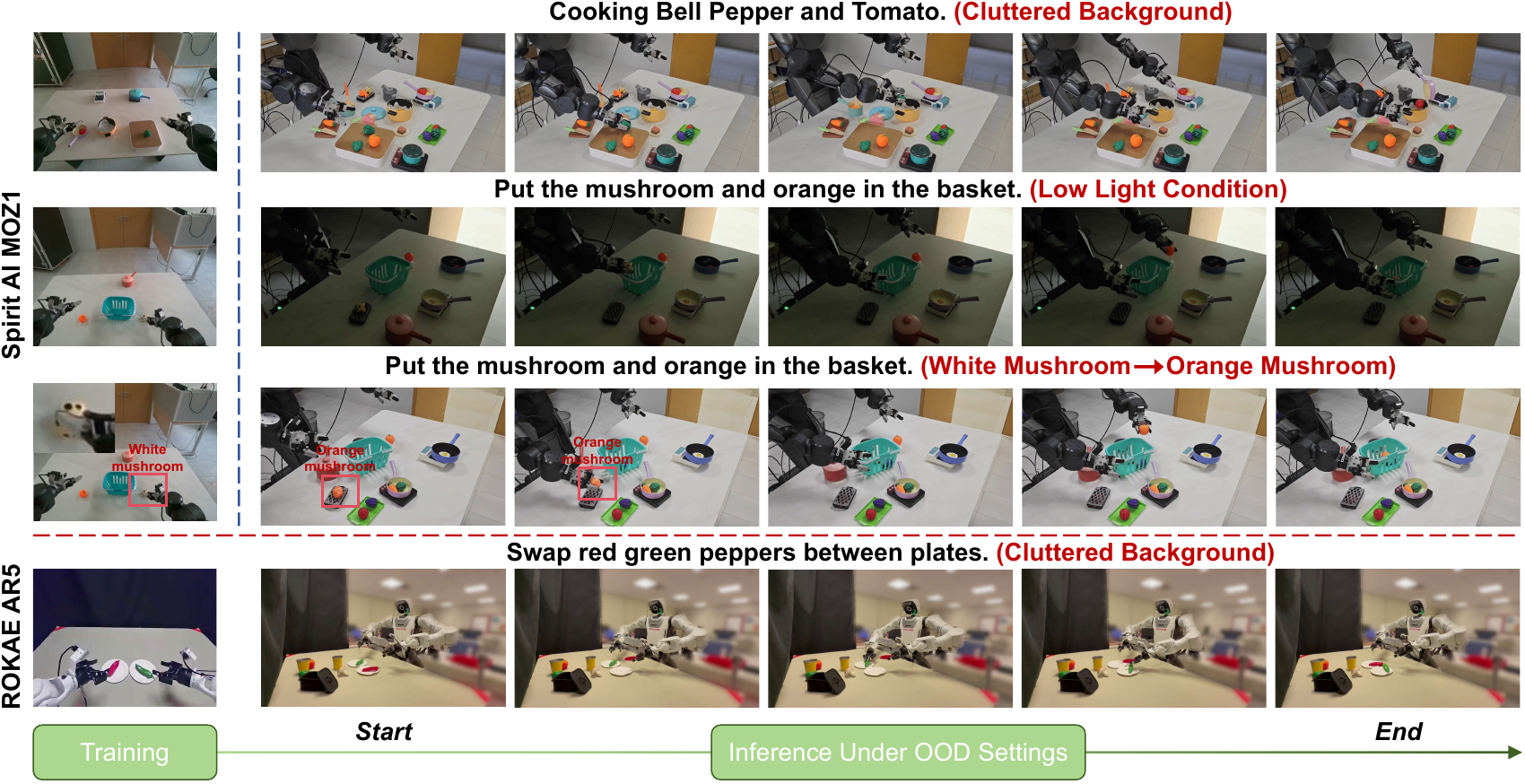}
    \vspace{-3mm}
    \caption{Visualization of manipulation tasks performed by our model in OOD settings. }
    \vspace{-6mm}
    \label{fig:ood_exp}
\end{figure}

\begin{figure}
    \centering
    \begin{subfigure}[t]{1.0\linewidth}
        \centering
        \includegraphics[width=\linewidth]{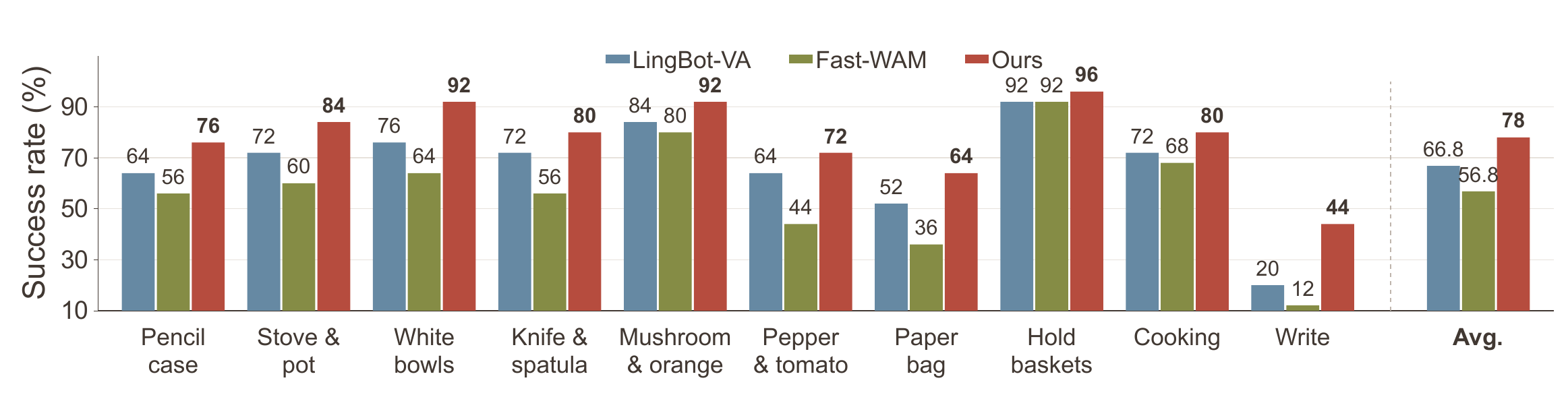}
        \vspace{-7mm}
        \caption{Comparison on the Spirit AI MOZ1 platform.}
        \vspace{-3mm}
        \label{fig:moz1}
    \end{subfigure}
    \hfill
    \begin{subfigure}[t]{0.58\linewidth}
        \centering
        \includegraphics[width=\linewidth]{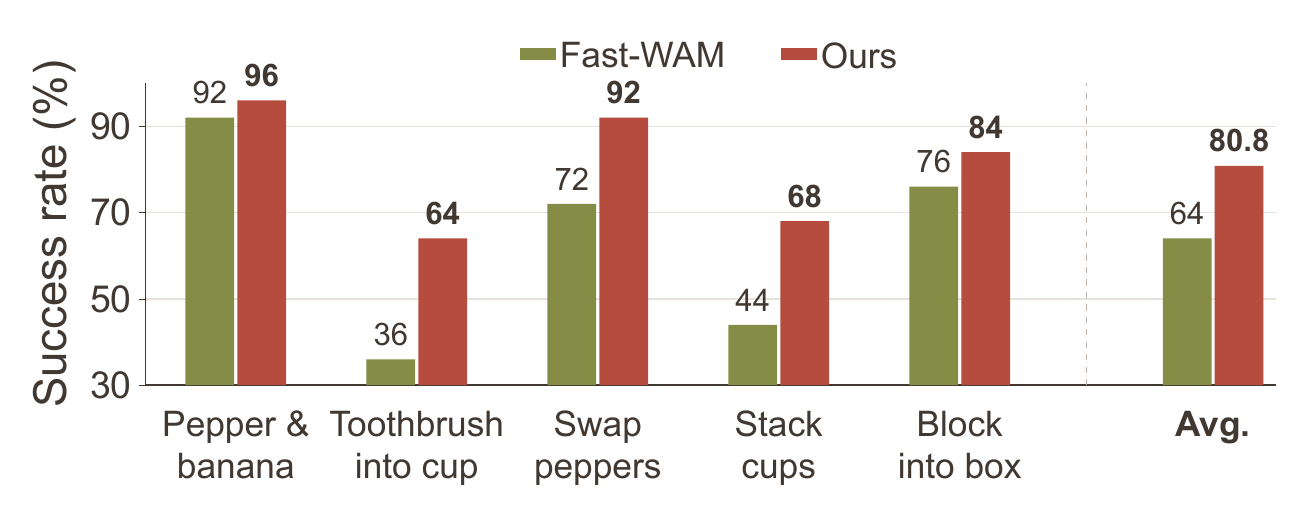}
        \vspace{-7mm}
        \caption{Comparison on ROKAE AR5 platform. }
        \label{fig:cross-embodiment}
    \end{subfigure}
    \hfill
    \begin{subfigure}[t]{0.4\linewidth}
        \centering
        \includegraphics[width=\linewidth]{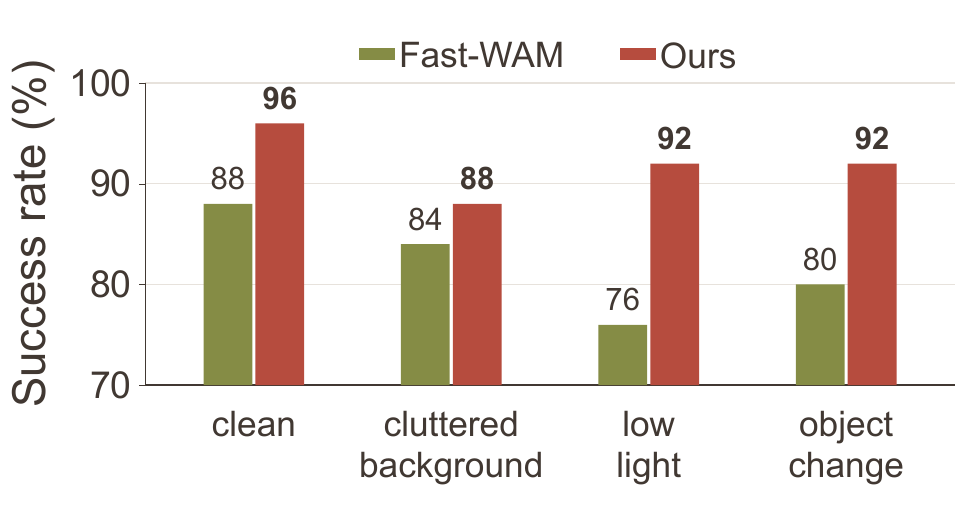}
        \vspace{-7mm}
        \caption{Comparison under different settings.}
        \label{fig:perturbations}
    \end{subfigure}
    \vspace{-3mm}
    \caption{Results on real-world manipulation tasks across robotic embodiments under OOD settings. }
    \vspace{-3mm}
    \label{fig:real_world}
    \vspace{-3mm}
\end{figure}

\subsection{Implementation details}
Following Fast-WAM~\citep{yuan2026fastwam}, the video expert (5B) is initialized from Wan2.2~\citep{wan2025}, retaining its video DiT, text encoder, and video VAE. The action expert (1B) adopts the same architectural design as the video branch, with its hidden dimension reduced to \(d_a=1024\). The action tokenizer follows the design of~\citet{pertsch2025fast}. All trainable parameters are optimized using AdamW for 10 epochs on LIBERO with 8 NVIDIA A100 GPUs and for 5 epochs on RoboTwin 2.0 with 32 NVIDIA H100 GPUs. We report success rates on various benchmarks, including LIBERO~\citep{liu2023libero}, RoboTwin 2.0~\citep{chen2026robotwin}, and LIBERO Plus~\citep{fei25libero-plus}. Physical experiments are conducted on two robotic platforms, Spirit AI MOZ1 and ROKAE AR5. Additional implementation details and evaluation settings are provided in the appendix.

\subsection{Main Comparison Results}

\textbf{Out-of-Distribution (OOD) Performance:}  
Since real-world environments involve various types of perturbations, we evaluate our model under OOD settings using the Spirit AI MOZ1 and ROKAE AR5 platforms. As shown in Fig.~\ref{fig:ood_exp}, we consider three types of OOD conditions: cluttered backgrounds, low light conditions, and unseen objects during training. The visualization of manipulation tasks performed by our model demonstrates its robustness under these OOD conditions. 
Additionally, as shown in Fig.~\ref{fig:real_world}(a)(b), we compare the success rates of our model with those of state-of-the-art baselines (e.g., LingBot-VA and Fast-WAM) across different robotic embodiments under OOD settings. As shown in Fig.~\ref{fig:real_world}(c), we further report the success rates on a representative real-world manipulation task, i.e., ``Put the mushroom and orange in the basket'', under different OOD conditions.
Our model consistently achieves higher success rates than the existing methods across different embodiments and OOD conditions, demonstrating the effectiveness of the proposed AED.

\begin{table*}[t]
  \centering
  \caption{Success rate (\%) on LIBERO and RoboTwin 2.0. LIBERO averages 50 rollouts per task over ten tasks per suite, and RoboTwin 2.0 averages 100 trials per task over 50 tasks in the `Clean'' and `Random'' environments. ``PT'' indicates whether robotic policy pretraining is used.
 }
  \vspace{-3mm}
  \label{tab:main_results}

  \small
  \setlength{\tabcolsep}{3.5pt}

  \begin{NiceTabular*}{\textwidth}{@{\extracolsep{\fill}}l|c|cccccccc@{}}
    \toprule
    \multirow{2}{*}{Method}
      & \multirow{2}{*}{PT}
      & \multicolumn{3}{c}{RoboTwin 2.0}
      & \multicolumn{5}{c}{LIBERO} \\
    \cmidrule(lr){3-5}
    \cmidrule(lr){6-10}
      & 
      & Clean & Random & Avg.
      & Spatial & Object & Goal & Long & Avg. \\
    \midrule
    OpenVLA~\citep{kim2024openvla}
      & \cmark
      & -- & -- & --
      & 84.7 & 88.4 & 79.2 & 53.7 & 76.5 \\

    $\pi_0$~\citep{black2024pi0}
      & \cmark
      & 65.9 & 58.4 & 62.2
      & 96.8 & \second{98.8} & 95.8 & 85.2 & 94.1 \\

    $\pi_{0.5}$~\citep{physicalintelligence2025pi05}
      & \cmark
      & 82.7 & 76.8 & 79.8
      & 98.8 & 98.2 & \second{98.0} & 92.4 & 96.9 \\

    Motus~\citep{bi2025motus}
      & \cmark
      & 88.7 & 87.0 & 87.8
      & 96.8 & \second{99.8} & 96.6 & 97.6 & 97.7 \\

    Fast-WAM~\citep{yuan2026fastwam}
      & \xmark
      & 91.9 & \second{91.8} & 91.8
      & 98.2 & \best{100.0} & 97.0 & 95.2 & 97.6 \\

    LingBot-VA~\citep{lingbot-va2026}
      & \cmark
      & \second{92.9} & 91.6 & \second{92.2}
      & 98.5 & 99.6 & 97.2 & \best{98.5} & \second{98.5} \\
    \midrule
    \rowcolor{gray!15}
    \textbf{AED} (\textbf{Ours})
      & \xmark
      & \best{93.2} & \best{92.3} & \best{92.8}
      & \best{99.0} & \best{100.0} & \best{98.4} & \second{97.8} & \best{98.8} \\
    \bottomrule
  \end{NiceTabular*}
\vspace{-4mm}
\end{table*}

\begin{table}[t]
  \centering
  \caption{Success rates (\%) under different perturbation types on LIBERO-Plus. ``PT'' indicates whether robotic policy pretraining is used. All success rates are computed over 10,030 trials.
  }
  \vspace{-3mm}
  \label{tab:libero_plus}

  \small
  \setlength{\tabcolsep}{3.5pt}

  \begin{NiceTabular*}{\linewidth}{
    @{\extracolsep{\fill}}
    l|c|*{8}{c}@{}
  }
    \toprule
    Method
      & PT
      & Cam.
      & Robot.
      & Lang.
      & Light.
      & Back.
      & Noise.
      & Layout.
      & Avg. \\
    \midrule

    OpenVLA~\citep{kim2024openvla}
      & \cmark
      & 0.8
      & 3.5
      & 23.0
      & 8.1
      & 34.8
      & 15.2
      & 28.5
      & 15.6 \\

    $\pi_0$~\citep{black2024pi0}
      & \cmark
      & 13.8
      & 6.0
      & 58.8
      & 85.0
      & 81.4
      & 79.0
      & 68.9
      & 53.6 \\

    $\pi_{0.5}$~\citep{physicalintelligence2025pi05}
      & \cmark
      & \second{75.4}
      & \best{77.5}
      & \best{85.6}
      & \second{96.9}
      & \best{94.6}
      & \second{89.7}
      & \best{85.7}
      & \second{85.7} \\

    % Motus~\citep{bi2025motus}
    %   & \cmark
    %   & --
    %   & --
    %   & --
    %   & --
    %   & --
    %   & --
    %   & --
    %   & -- \\

    Fast-WAM~\citep{yuan2026fastwam}
      & \xmark
      & 48.4
      & \second{75.7}
      & 66.8
      & 96.6
      & 68.5
      & 70.1
      & 76.5
      & 70.8 \\

    % LingBot-VA~\citep{lingbot-va2026}
    %   & \cmark
    %   & --
    %   & --
    %   & --
    %   & --
    %   & --
    %   & --
    %   & --
    %   & -- \\

    \midrule
    \rowcolor{gray!15}
    \textbf{AED} (\textbf{Ours})
      & \xmark
      & \best{93.4}
      & 72.9
      & \second{72.3}
      & \best{99.6}
      & \second{83.1}
      & \best{98.4}
      & \second{85.3}
      & \best{86.2} \\

    \bottomrule
  \end{NiceTabular*}
\vspace{-4mm}
\end{table}

% \FloatBarrier

\begin{table}[!t]
\begingroup
\centering

% Caption style
\captionsetup{
    position=top,
    skip=0pt,
    font=small,
    justification=raggedright,
    singlelinecheck=false
}

\setlength{\tabcolsep}{1.5pt}
\renewcommand{\arraystretch}{1}

% ============================================================
% Table 1
% ============================================================
\begin{minipage}[t]{0.30\linewidth}
    \vspace{0pt}

    \begin{minipage}[t][12pt][t]{\linewidth}
        \caption{Results on LIBERO-10.}
        \label{tab:ablation_libero_goal}
    \end{minipage}
    \par\nointerlineskip

    \footnotesize
    \begin{tabular*}{\linewidth}{
        @{\extracolsep{\fill}}lcccc@{}
    }
        \toprule
        \tblhead
        {Variant}
        & {MT}
        & {VC}
        & {AED}
        & Avg. \\
        \midrule

        \tblfour
        \textbf{Ours}
        & $\checkmark$
        & $\checkmark$
        & $\checkmark$
        & \best{97.4} \\

        \tblfour
        -- MT
        &
        & $\checkmark$
        & $\checkmark$
        & 96.8 \\

        \tblfour
        -- VC
        &
        &
        & $\checkmark$
        & 95.6 \\

        \tblfour
        -- AED
        &
        &
        &
        & 94.8 \\

        \bottomrule
    \end{tabular*}
\end{minipage}
\hfill
%
% ============================================================
% Table 2
% ============================================================
\begin{minipage}[t]{0.335\linewidth}
    \vspace{0pt}

    \begin{minipage}[t][12pt][t]{\linewidth}
        \caption{Inference cost on LIBERO.}
        \label{tab:latency_memory}
    \end{minipage}
    \par\nointerlineskip

    \footnotesize
    \begin{tabular*}{\linewidth}{
        @{\extracolsep{\fill}}ccc@{}
    }
        \toprule
        \tblhead
        {Steps}
        & {Latency (ms)}
        & {Memory} (GB) \\
        \midrule

        \tblfour
        1
        & $167.3\pm1.6$
        & 24.1 \\

        \tblfour
        4
        & $284.6\pm1.6$
        & 24.1 \\

        \tblfour
        10
        & $521.2\pm3.0$
        & 24.1 \\

        \tblfour
        15
        & $708.5\pm8.0$
        & 24.1 \\

        % \tblfive
        % 20
        % & $895.0\pm1.3$
        % & 24.13 \\

        \bottomrule
    \end{tabular*}
\end{minipage}
\hfill
%
% ============================================================
% Table 3
% ============================================================
\begin{minipage}[t]{0.325\linewidth}
    \vspace{0pt}

    \begin{minipage}[t][12pt][t]{\linewidth}
        \caption{Latency (ms) comparisons.}
        \label{tab:latency_success_goal}
    \end{minipage}
    \par\nointerlineskip

    \footnotesize
    \begin{tabular*}{\linewidth}{
        @{\extracolsep{\fill}}lcc@{}
    }
        \toprule
        \tblhead
        Method
        & LIBERO Goal
        & MOZ1 \\
        \midrule

        \tblfour
        $\pi_{0.5}$
        & 220.0
        & 494.9 \\

        \tblfour
        Motus
        & 2131.6
        & 2163.7 \\

        \tblfour
        Fast-WAM
        & 497.1
        & 513.8 \\

        \tblfour
        \textbf{Ours}
        & 518.7
        & 520.3 \\

        \bottomrule
    \end{tabular*}
\end{minipage}

\par\vspace{3pt}
\endgroup
\vspace{-6mm}
\end{table}

\textbf{Benchmark Comparison:}
Tab.~\ref{tab:main_results} shows that our method achieves the highest reported average success rates on both RoboTwin 2.0 (92.8\%) and LIBERO (98.8\%). On \textbf{RoboTwin 2.0}, our method achieves success rates of 93.2\% under the Clean setting and 92.3\% under the Random setting, yielding an average of 92.8\%. These results outperform the strongest reported baseline for each metric by 0.3, 0.5, and 0.6 percentage points, respectively. On \textbf{LIBERO}, our method outperforms the strongest baseline, Fast-WAM~\citep{yuan2026fastwam}, by 1.2 percentage points on average, with improvements of 0.8\%, 1.4\%, and 2.6\% on Spatial, Goal, and Long, respectively, while matching its 100.0\% success rate on Object. Our method ranks first on Spatial and Goal and ties for first on Object.
As shown in Tab.~\ref{tab:libero_plus}, our method achieves an overall success rate of 86.2\% on \textbf{LIBERO-Plus}, outperforming both Fast-WAM and $\pi_{0.5}$ without using embodied policy pretraining. It ranks first under the {Camera}, {Light}, and {Noise} perturbations and outperforms Fast-WAM in six of the seven categories, with {Robot} being the only exception. These results indicate that our robustness gains are concentrated in variations involving viewpoint, lighting, and sensor noise. 
\begin{figure}[t]
\centering
\includegraphics[width=0.98\linewidth]{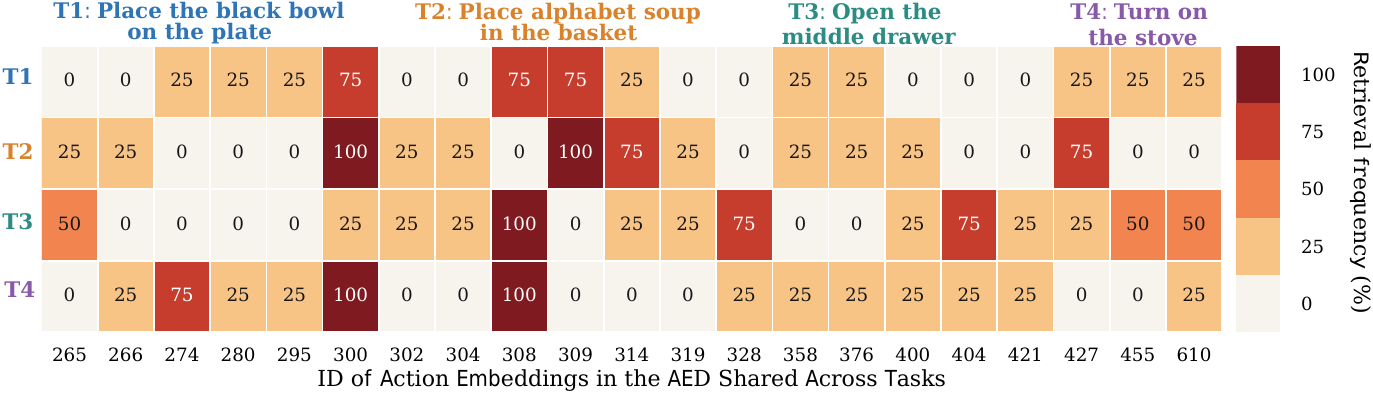}
\vspace{-3.5mm}
\caption{Analysis of reusing action experience embeddings across different manipulation tasks. }
\vspace{-5mm}
\label{fig:shared_tokens}
\end{figure}

% \vspace{-2mm}
\subsection{Ablation Study and Efficiency Analysis}
% \vspace{-2mm}
\label{sec:ablation}

\begin{figure}[t]
    \centering

    \begin{subfigure}[t]{0.60\linewidth}
        \centering
        \includegraphics[width=\linewidth]{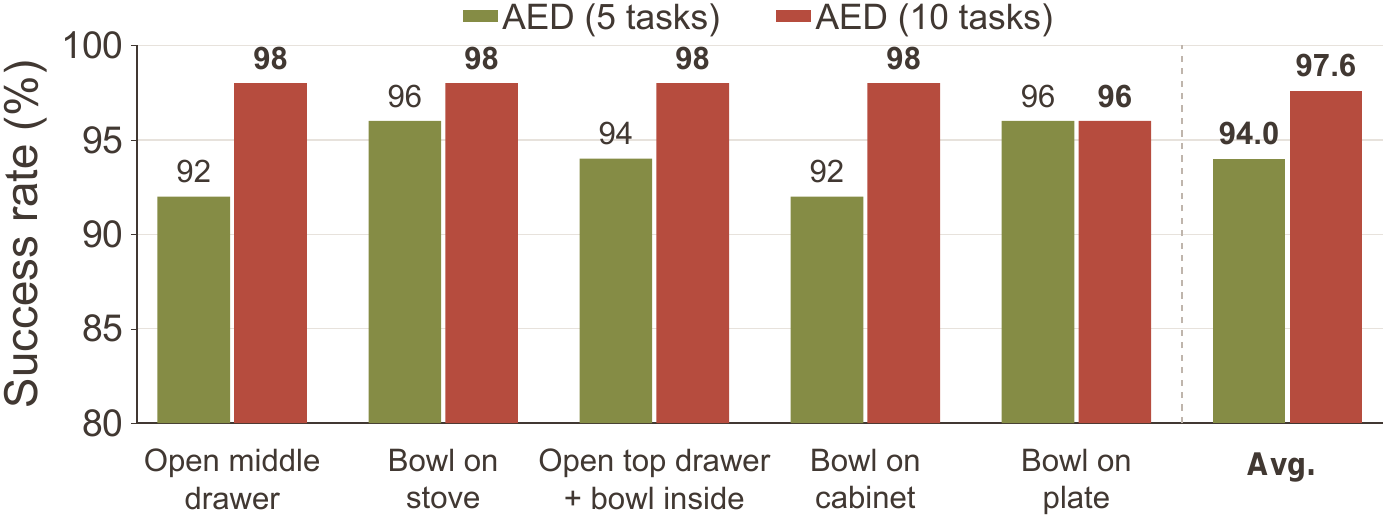}
        \vspace{-5mm}
        \caption{Quantitative Analysis of AED. }
        \label{fig:realworld-perturbations}
    \end{subfigure}
    % \hfill
    \hspace{0.01\linewidth}
    \begin{subfigure}[t]{0.36\linewidth}
        \centering
        \includegraphics[width=\linewidth]{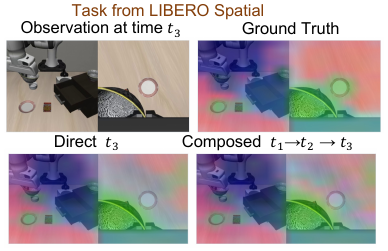}
        \vspace{-5mm}
        \caption{Direct versus composed transitions. }
        \label{fig:composed-visualization}
    \end{subfigure}
    \vspace{-3mm}
    \caption{\textbf{(a)} Quantitative performance evaluation of the proposed AED on LIBERO Goal. \textbf{(b)} Visualization of direct and composed transition predictions under supervision from the MT loss.}
    \vspace{-5mm}
    \label{fig:transfer}
\end{figure}

\textbf{Ablation Study:}
To evaluate the effectiveness of each component, we conduct ablation studies on the proposed motion-aware transition (MT) loss, visually conditioned action embeddings (VC), and learnable action experience dictionary (AED). As presented in Tab.~\ref{tab:ablation_libero_goal}, our full model achieves the highest success rate of \textbf{97.4\%} on LIBERO-10. Removing the MT loss reduces the success rate to 96.8\%, showing that MT loss supervision improves action prediction. Removing VC further degrades the performance, while further removing the AED results in a 2.6\% lower success rate than that of the full model. These ablation studies demonstrate the effectiveness of our model in reusing action experience across tasks to facilitate the learning of target manipulation tasks.

\textbf{Efficiency Analysis:} 
As shown in Tab.~\ref{tab:latency_memory}, we measure inference latency and peak GPU memory usage on LIBERO-10, with future frame decoding disabled and the transition predictor removed during inference. Increasing the number of denoising steps from 1 to 15 increases the latency from 167.3 to 708.5 ms, while peak memory usage remains constant at 24.1 GB. Tab.~\ref{tab:latency_success_goal} further compares the inference latency of our model on LIBERO Goal and a real-world robotic platform (e.g., Spirit AI MOZ1) against SOTA WAMs~\citep{yuan2026fastwam} and VLA models~\cite{physicalintelligence2025pi05} using a single NVIDIA A100 GPU, demonstrating comparable efficiency and strong potential for both simulated and real-world deployment. 
While maintaining efficiency comparable to that of the baselines, our model achieves significant performance improvements (see Tabs.~\ref{tab:main_results}--\ref{tab:libero_plus} and Fig.~\ref{fig:real_world}).

\subsection{Analysis of Action Experience Dictionary (AED)}
\vspace{-2mm}
To analyze how different manipulation tasks reuse action embeddings (i.e., skill patterns) shared across tasks,  Fig.~\ref{fig:shared_tokens} visualizes the reuse frequencies of shared embeddings across four randomly selected LIBERO tasks. We observe that many action embeddings are frequently reused across these tasks, indicating that the proposed AED can encode action-relevant skill patterns shared across tasks and leverage them to improve the performance of target manipulation tasks during training. A high frequency of reusing the same action embeddings across different tasks suggests stronger cross-task relationships. The proposed model captures such inter-task relationships through shared action embeddings in the AED and leverages them to facilitate future action chunk prediction.

To quantitatively evaluate the efficacy of reusable action experience across tasks, as shown in Fig.~\ref{fig:transfer}(a), we randomly select five LIBERO Goal tasks as target tasks and treat the remaining five as auxiliary source tasks. Despite having different goals, these tasks share reusable skill patterns, such as grasping, transporting, and placing objects. We train one model on five target tasks and another on the same target tasks plus five auxiliary source tasks, evaluating both on the same target tasks. In Fig.~\ref{fig:transfer}(a), training the proposed AED on all ten tasks consistently yields higher success rates. Such improvement suggests positive transfer from the additional related tasks, i.e., shared skill patterns learned from the five auxiliary source tasks benefit the learning of the five target tasks.

% \vspace{-2mm}
\subsection{Analysis of Motion-Aware Transition (MT) Loss}
% \vspace{-2mm}
To qualitatively evaluate the MT loss, we visualize its compositional generalization ability in Fig.~\ref{fig:transfer}(b). ``Direct $t_3$'' predicts the feature change from $t_1$ to $t_3$ directly, while ``Composed $t_1 \rightarrow t_2 \rightarrow t_3$'' predicts it through two consecutive transitions. The two methods produce similar visualization results and identify action-relevant objects (e.g., grippers and bowls), indicating effective transition composition. With the guidance of MT loss, the policy learns to suppress background interference.
\textcolor{blue}{Additional analyses of sampling strategies, action aggregation, visual encoders, prefix designs, action experience embeddings, and learnable visual queries are provided in the appendix. }

% \vspace{-3mm}
\section{Conclusion}
% \vspace{-3mm}
In this paper, we propose a novel Action Experience Dictionary (AED) for learning skills from historical trajectories. AED encodes trajectories into shared action embeddings and conditions them on visual context to capture task-relevant interactions. We further propose a motion-aware transition loss that predicts visual feature changes over random temporal intervals, encouraging action-centric representations. Experiments across simulation benchmarks and real-world cross-embodiment evaluations demonstrate improved manipulation performance over baseline approaches.

\bibliography{iclr2027_conference}
\bibliographystyle{iclr2027_conference}

\newpage
\appendix
\section{Reproducibility and Robotic Manipulation Demos}
The real-world and simulation demos and reproducible code are available on the anonymous project: \url{https://anonymous.4open.science/w/supplementary_materials-4586}.

\section{Limitation}
Our method has two main limitations. Firstly, a frozen visual model needs to be loaded, which may slow down the optimization speed during training, although this can be solved by caching visual features in advance. Secondly, the finite history window limits access to longer-term dependencies. We will address this issue in our future work.

\section{Hyperparameters and Implementation Details}
\label{app:implementation_details}
\subsection{Model and Representation Configuration}
\label{app:model_configuration}
\paragraph{Backbone and trainable modules.}
We initialize the video expert from Wan2.2-TI2V-5B from pretrained Wan2.2 weights, with the ActionDiT action expert initialized by linear interpolation of the parameters from the video expert. Both experts contain 30 layers, with hidden dimensions of 3,072 and 1,024, respectively. We train both experts, the Action Experience Dictionary (AED), the visual memory and action--visual fusion modules, and the transition predictor. The video VAE and the LingBot-vision encoder remain frozen. We employ delta pose to control the robot.

\paragraph{Action Experience Dictionary.}
The AED contains 2,048 valid entries of dimension 1,024, plus a separate padding (PAD) entry. FAST+ token indices retrieve dictionary embeddings, which are aggregated with temporal positional encoding as described in the main paper.

\paragraph{Visual conditioning.}
Historical visual features are extracted by the VAE and compressed into 128 embeddings using two memory-compression layers. A single action--visual fusion block combines the visual memory with the action history to produce eight historical prefix tokens for the action expert. The frozen LingBot encoder provides the visual targets for transition supervision.

\subsection{Optimization and Training Configuration}
\label{app:training_configuration}
We use AdamW with a learning rate of $10^{-4}$, $\beta_1=0.9$, $\beta_2=0.95$, weight decay of 0.01, and a gradient-clipping threshold of 1. The schedule includes 10\% warmup updates followed by cosine decay to $10^{-6}$ at the end of the planned run. Training uses 8 NVIDIA A100 GPUs and 32 NVIDIA H100 GPUs with eight samples per GPU. We use BF16 precision and DeepSpeed ZeRO-1. The full training schedule comprises 10 epochs for LIBERO and 5 epochs for RoboTwin.
The video and action flow-matching losses each have a weight of 1. The MT loss weight is 0.01.

\subsection{Observation and Action Preprocessing}
\label{app:preprocessing_details}
For single-arm manipulation, the input includes a third-person view and a wrist-camera view. For dual-arm manipulation, we use three views: a head-mounted view and one wrist-camera view for each arm. Each image is resized to $224\times224$, and the views are concatenated horizontally.

Single-arm actions are represented by seven-dimensional vectors. Actions are normalized using min--max scaling. The history contains 32 action steps, which are aggregated into eight consecutive groups of four steps before FAST+ tokenization.

\subsection{Inference and Deployment}
\label{app:inference_deployment}
We use NVIDIA A100 GPUs for inference. The policy predicts a chunk of 32 future actions using 10 flow-matching ordinary differential equation (ODE) integration steps. We execute the first 10 predicted actions before replanning, at a control frequency of 30~Hz.

\begin{table*}[t]
    \centering
    \small
    \begin{minipage}{0.98\textwidth}
    \refstepcounter{algorithm}\label{alg:aed_wam}
    \hrule height 0.6pt
    \vspace{2pt}
    \noindent\textbf{Algorithm \thealgorithm\quad Pipeline of the Proposed AED-WAM}
    \vspace{2pt}\hrule
    \medskip
    \noindent\textbf{Input:} Training batch $\mathcal B$ with instruction $\mathbf c$, observations $\mathbf o$, proprioceptive states $\mathbf s$, historical actions $\mathbf a^h$, and flow-matching targets $(\mathbf x_v,\mathbf x_a)$. At inference, use $(\mathbf c,\mathbf o_t,\mathbf s_t,\mathbf a^h,\mathbf z_v^h)$, where $\mathbf z_v^h=\operatorname{VAE}(\mathbf o^h)$.\\
    \noindent\textbf{Output:} Action chunk $\widehat{\mathbf a}_{t:t+H-1}$.\par\smallskip
    \begin{tabular}{@{}r@{\quad}p{0.91\textwidth}@{}}
    \multicolumn{2}{@{}l}{\textbf{$\triangleright$ Training}} \\
    1: & Set $k=H/V$; compute $\widehat{\mathbf a}^{h}[j]$ by Eq.~(\ref{eq:history_action_grouping}), tokenize $\zeta_j=\Phi(\widehat{\mathbf a}^{h}[j])$, and retrieve the temporally ordered action embeddings $\mathbf f^\star=[\mathbf p_j+M^{-1}\sum_l\mathcal D(\zeta_j[l])]_{j=1}^{V}$. \\
    2: & Compress historical visual latents $\mathbf z_v^h$ to $\widehat{\mathbf z}^{h}_v$ and obtain visually conditioned action embeddings $\mathbf e_v=\mathcal E(\mathbf f^\star,\widehat{\mathbf z}^{h}_v)$; prepend them to noisy action tokens (Eq.~(\ref{eq:action_prefix})). \\
    3: & Sample $\tau$ and $\boldsymbol\epsilon_e$; set $\mathbf x_e^\tau=(1-\tau)\mathbf x_e+\tau\boldsymbol\epsilon_e$ for $e\in\{v,a\}$ and compute $\mathcal L_{\rm FM}$ (Eq.~(\ref{eq:flow_matching_loss})). \\
    4: & Sample $(t',\Delta)$; predict $\Delta\widehat{\mathbf h}_{t'}=\mathcal G(\mathbf h_{t'},\mathbf u_{t'}^\Delta)$ and compute $\mathcal L_{\rm MT}$ against $\Delta\mathbf h_{t'}=\mathbf h_{t'+\Delta}-\mathbf h_{t'}$ (Eq.~(\ref{eq:transition_loss})). \\
    5: & Update trainable parameters by $\mathcal L=\mathcal L_{\rm FM}+0.01\,\mathcal L_{\rm MT}$; keep the video VAE and target encoder $\mathcal F$ frozen. \\
    \multicolumn{2}{@{}l}{\textbf{$\triangleright$ Inference}} \\
    6: & Receive $(\mathbf c,\mathbf o_t,\mathbf s_t,\mathbf a^h,\mathbf z_v^h)$ and repeat Steps 1--2 to construct $\mathbf e_v$. \\
    7: & Initialize the video and action latents with Gaussian noise at $\tau_0=1$; integrate the joint flow-matching ODE backward to $\tau_K=0$ with $K$ Euler steps, conditioned on $(\mathbf o_t,\mathbf s_t,\mathbf c,\mathbf e_v)$. Only the resulting action chunk is executed. \\
    8: & Decode $\widehat{\mathbf a}_{t:t+H-1}$, execute the first 10 actions at 30~Hz, append the new history and latents, and repeat Step 6. \\
    \end{tabular}
    \vspace{2pt}\hrule height 0.6pt
    \end{minipage}
\end{table*}

\section{Per-Task Benchmark Results}
\label{app:per_task_results}

\subsection{LIBERO}
% BEGIN GENERATED LIBERO40 RESULTS
\label{app:libero40_results}
Table~\ref{tab:libero40_per_task} reports the 42K checkpoint's success rates on standard LIBERO, with 1,976 successes over 2,000 trials (98.80\%). Task IDs match the S0--S9, O0--O9, G0--G9, and L0--L9 definitions in the task protocol.

\begingroup
\setlength{\LTpre}{6pt}
\setlength{\LTpost}{6pt}
\setlength{\LTcapwidth}{\textwidth}

\footnotesize
\setlength{\tabcolsep}{4pt}
\renewcommand{\arraystretch}{1.02}

\begin{longtable}{@{}>{\raggedright\arraybackslash}p{0.045\textwidth}>{\raggedright\arraybackslash}p{0.70\textwidth}rr@{}}
\caption{Per-task success rates on all 40 standard LIBERO tasks, grouped by suite. The same 42,000-step checkpoint is evaluated with seed 3407, 50 trials per task, and replanning after ten executed actions. SR denotes success rate in percent. Each suite average covers 500 trials; the overall average covers 2,000 trials.}\label{tab:libero40_per_task}\\
\toprule
\textbf{ID} & \textbf{Task instruction} & \textbf{Successes} & \textbf{SR (\%)} \\
\midrule
\endfirsthead
\caption[]{LIBERO per-task results (continued).}\\
\toprule
\textbf{ID} & \textbf{Task instruction} & \textbf{Successes} & \textbf{SR (\%)} \\
\midrule
\endhead
\midrule
\multicolumn{4}{r}{Continued on next page}\\
\endfoot
\bottomrule
\endlastfoot
\multicolumn{4}{@{}l}{\textbf{LIBERO-Spatial}}\\
S0 & pick up the black bowl between the plate and the ramekin and place it on the plate & 50/50 & 100.00 \\
S1 & pick up the black bowl next to the ramekin and place it on the plate & 50/50 & 100.00 \\
S2 & pick up the black bowl from table center and place it on the plate & 50/50 & 100.00 \\
S3 & pick up the black bowl on the cookie box and place it on the plate & 48/50 & 96.00 \\
S4 & pick up the black bowl in the top drawer of the wooden cabinet and place it on the plate & 49/50 & 98.00 \\
S5 & pick up the black bowl on the ramekin and place it on the plate & 49/50 & 98.00 \\
S6 & pick up the black bowl next to the cookie box and place it on the plate & 50/50 & 100.00 \\
S7 & pick up the black bowl on the stove and place it on the plate & 49/50 & 98.00 \\
S8 & pick up the black bowl next to the plate and place it on the plate & 50/50 & 100.00 \\
S9 & pick up the black bowl on the wooden cabinet and place it on the plate & 50/50 & 100.00 \\
\multicolumn{2}{@{}l}{\textbf{Suite average}} & \textbf{495/500} & \textbf{99.00} \\
\midrule
\multicolumn{4}{@{}l}{\textbf{LIBERO-Object}}\\
O0 & pick up the alphabet soup and place it in the basket & 50/50 & 100.00 \\
O1 & pick up the cream cheese and place it in the basket & 50/50 & 100.00 \\
O2 & pick up the salad dressing and place it in the basket & 50/50 & 100.00 \\
O3 & pick up the bbq sauce and place it in the basket & 50/50 & 100.00 \\
O4 & pick up the ketchup and place it in the basket & 50/50 & 100.00 \\
O5 & pick up the tomato sauce and place it in the basket & 50/50 & 100.00 \\
O6 & pick up the butter and place it in the basket & 50/50 & 100.00 \\
O7 & pick up the milk and place it in the basket & 50/50 & 100.00 \\
O8 & pick up the chocolate pudding and place it in the basket & 50/50 & 100.00 \\
O9 & pick up the orange juice and place it in the basket & 50/50 & 100.00 \\
\multicolumn{2}{@{}l}{\textbf{Suite average}} & \textbf{500/500} & \textbf{100.00} \\
\midrule
\multicolumn{4}{@{}l}{\textbf{LIBERO-Goal}}\\
G0 & open the middle drawer of the cabinet & 49/50 & 98.00 \\
G1 & put the bowl on the stove & 48/50 & 96.00 \\
G2 & put the wine bottle on top of the cabinet & 48/50 & 96.00 \\
G3 & open the top drawer and put the bowl inside & 50/50 & 100.00 \\
G4 & put the bowl on top of the cabinet & 49/50 & 98.00 \\
G5 & push the plate to the front of the stove & 50/50 & 100.00 \\
G6 & put the cream cheese in the bowl & 49/50 & 98.00 \\
G7 & turn on the stove & 50/50 & 100.00 \\
G8 & put the bowl on the plate & 50/50 & 100.00 \\
G9 & put the wine bottle on the rack & 49/50 & 98.00 \\
\multicolumn{2}{@{}l}{\textbf{Suite average}} & \textbf{492/500} & \textbf{98.40} \\
\midrule
\multicolumn{4}{@{}l}{\textbf{LIBERO-Long (LIBERO-10)}}\\
L0 & put both the alphabet soup and the tomato sauce in the basket & 49/50 & 98.00 \\
L1 & put both the cream cheese box and the butter in the basket & 50/50 & 100.00 \\
L2 & turn on the stove and put the moka pot on it & 49/50 & 98.00 \\
L3 & put the black bowl in the bottom drawer of the cabinet and close it & 47/50 & 94.00 \\
L4 & put the white mug on the left plate and put the yellow and white mug on the right plate & 48/50 & 96.00 \\
L5 & pick up the book and place it in the back compartment of the caddy & 50/50 & 100.00 \\
L6 & put the white mug on the plate and put the chocolate pudding to the right of the plate & 48/50 & 96.00 \\
L7 & put both the alphabet soup and the cream cheese box in the basket & 50/50 & 100.00 \\
L8 & put both moka pots on the stove & 49/50 & 98.00 \\
L9 & put the yellow and white mug in the microwave and close it & 49/50 & 98.00 \\
\multicolumn{2}{@{}l}{\textbf{Suite average}} & \textbf{489/500} & \textbf{97.80} \\
\midrule
\multicolumn{2}{@{}l}{\textbf{Overall average}} & \textbf{1976/2000} & \textbf{98.80} \\

\end{longtable}
\endgroup

\subsection{RoboTwin 2.0}
\label{app:robotwin_per_task}

Table~\ref{tab:robotwin_per_task} compares our method with Fast-WAM~\citep{yuan2026fastwam} on all 50 RoboTwin 2.0 tasks. Our evaluation uses a single checkpoint and 100 trials per task in each of the Clean and Random settings, yielding 10,000 trials in total. Our method succeeds in 4,659/5,000 Clean trials and 4,617/5,000 Random trials, corresponding to 93.18\% and 92.34\%, respectively, and an overall success rate of 92.76\%.

\begingroup
\setlength{\LTpre}{6pt}
\setlength{\LTpost}{6pt}
\setlength{\LTcapwidth}{\textwidth}

  \footnotesize
  \setlength{\tabcolsep}{5pt}
  \renewcommand{\arraystretch}{1.02}
  \begin{longtable}{@{\extracolsep{\fill}}lrrrrrr@{}}
    \caption{Per-task success rate (\%) on RoboTwin 2.0. Fast-WAM entries are reported baseline results; ours use 100 trials per task and setting with unseen instructions. Avg. averages Clean and Random, and the final row averages all 50 tasks. Bold indicates the better result between the two methods for each metric, including ties.}\label{tab:robotwin_per_task}\\
\toprule
    & \multicolumn{3}{c}{Fast-WAM} & \multicolumn{3}{c}{Ours} \\
    \cmidrule(lr){2-4}\cmidrule(lr){5-7}
    Task & Clean & Random & Avg. & Clean & Random & Avg. \\
    \midrule
\endfirsthead
\caption[]{RoboTwin 2.0 per-task results (continued).}\\
\toprule
    & \multicolumn{3}{c}{Fast-WAM} & \multicolumn{3}{c}{Ours} \\
    \cmidrule(lr){2-4}\cmidrule(lr){5-7}
    Task & Clean & Random & Avg. & Clean & Random & Avg. \\
    \midrule
\endhead
\midrule
\multicolumn{7}{r}{Continued on next page}\\
\endfoot
\bottomrule
\endlastfoot
    \texttt{adjust\_bottle} & \textbf{100.00} & \textbf{100.00} & \textbf{100.00} & \textbf{100.00} & 99.00 & 99.50 \\
    \texttt{beat\_block\_hammer} & \textbf{99.00} & 97.00 & 98.00 & 98.00 & \textbf{99.00} & \textbf{98.50} \\
    \texttt{blocks\_ranking\_rgb} & \textbf{100.00} & \textbf{100.00} & \textbf{100.00} & \textbf{100.00} & 99.00 & 99.50 \\
    \texttt{blocks\_ranking\_size} & \textbf{94.00} & \textbf{98.00} & \textbf{96.00} & 92.00 & 96.00 & 94.00 \\
    \texttt{click\_alarmclock} & \textbf{100.00} & \textbf{100.00} & \textbf{100.00} & \textbf{100.00} & 99.00 & 99.50 \\
    \texttt{click\_bell} & \textbf{100.00} & \textbf{100.00} & \textbf{100.00} & \textbf{100.00} & \textbf{100.00} & \textbf{100.00} \\
    \texttt{dump\_bin\_bigbin} & 97.00 & 96.00 & 96.50 & \textbf{98.00} & \textbf{98.00} & \textbf{98.00} \\
    \texttt{grab\_roller} & \textbf{100.00} & \textbf{100.00} & \textbf{100.00} & \textbf{100.00} & \textbf{100.00} & \textbf{100.00} \\
    \texttt{handover\_block} & \textbf{95.00} & 81.00 & 88.00 & 94.00 & \textbf{84.00} & \textbf{89.00} \\
    \texttt{handover\_mic} & \textbf{99.00} & \textbf{100.00} & \textbf{99.50} & \textbf{99.00} & 99.00 & 99.00 \\
    \texttt{hanging\_mug} & \textbf{58.00} & \textbf{62.00} & \textbf{60.00} & 56.00 & 58.00 & 57.00 \\
    \texttt{lift\_pot} & \textbf{100.00} & \textbf{100.00} & \textbf{100.00} & \textbf{100.00} & \textbf{100.00} & \textbf{100.00} \\
    \texttt{move\_can\_pot} & 90.00 & 88.00 & 89.00 & \textbf{99.00} & \textbf{98.00} & \textbf{98.50} \\
    \texttt{move\_pillbottle\_pad} & \textbf{100.00} & 99.00 & 99.50 & \textbf{100.00} & \textbf{100.00} & \textbf{100.00} \\
    \texttt{move\_playingcard\_away} & \textbf{100.00} & \textbf{100.00} & \textbf{100.00} & 99.00 & \textbf{100.00} & 99.50 \\
    \texttt{move\_stapler\_pad} & 77.00 & 64.00 & 70.50 & \textbf{86.00} & \textbf{76.00} & \textbf{81.00} \\
    \texttt{open\_laptop} & \textbf{98.00} & \textbf{100.00} & \textbf{99.00} & 96.00 & \textbf{100.00} & 98.00 \\
    \texttt{open\_microwave} & 62.00 & 45.00 & 53.50 & \textbf{86.00} & \textbf{68.00} & \textbf{77.00} \\
    \texttt{pick\_diverse\_bottles} & 80.00 & 85.00 & 82.50 & \textbf{82.00} & \textbf{87.00} & \textbf{84.50} \\
    \texttt{pick\_dual\_bottles} & \textbf{100.00} & \textbf{96.00} & \textbf{98.00} & \textbf{100.00} & \textbf{96.00} & \textbf{98.00} \\
    \texttt{place\_a2b\_left} & 95.00 & 93.00 & 94.00 & \textbf{100.00} & \textbf{95.00} & \textbf{97.50} \\
    \texttt{place\_a2b\_right} & 93.00 & \textbf{99.00} & 96.00 & \textbf{96.00} & 97.00 & \textbf{96.50} \\
    \texttt{place\_bread\_basket} & 91.00 & 93.00 & 92.00 & \textbf{92.00} & \textbf{97.00} & \textbf{94.50} \\
    \texttt{place\_bread\_skillet} & 90.00 & \textbf{93.00} & 91.50 & \textbf{92.00} & \textbf{93.00} & \textbf{92.50} \\
    \texttt{place\_burger\_fries} & 96.00 & \textbf{99.00} & 97.50 & \textbf{99.00} & 97.00 & \textbf{98.00} \\
    \texttt{place\_can\_basket} & 71.00 & \textbf{69.00} & 70.00 & \textbf{81.00} & 65.00 & \textbf{73.00} \\
    \texttt{place\_cans\_plasticbox} & \textbf{99.00} & \textbf{96.00} & \textbf{97.50} & 98.00 & 94.00 & 96.00 \\
    \texttt{place\_container\_plate} & 96.00 & \textbf{100.00} & 98.00 & \textbf{99.00} & 99.00 & \textbf{99.00} \\
    \texttt{place\_dual\_shoes} & \textbf{94.00} & 88.00 & \textbf{91.00} & 89.00 & \textbf{91.00} & 90.00 \\
    \texttt{place\_empty\_cup} & \textbf{100.00} & \textbf{100.00} & \textbf{100.00} & 99.00 & \textbf{100.00} & 99.50 \\
    \texttt{place\_fan} & \textbf{96.00} & \textbf{96.00} & \textbf{96.00} & 94.00 & 94.00 & 94.00 \\
    \texttt{place\_mouse\_pad} & 83.00 & 89.00 & 86.00 & \textbf{95.00} & \textbf{90.00} & \textbf{92.50} \\
    \texttt{place\_object\_basket} & \textbf{89.00} & \textbf{88.00} & \textbf{88.50} & \textbf{89.00} & 85.00 & 87.00 \\
    \texttt{place\_object\_scale} & 90.00 & \textbf{97.00} & 93.50 & \textbf{93.00} & 96.00 & \textbf{94.50} \\
    \texttt{place\_object\_stand} & 90.00 & \textbf{94.00} & 92.00 & \textbf{93.00} & \textbf{94.00} & \textbf{93.50} \\
    \texttt{place\_phone\_stand} & 97.00 & \textbf{99.00} & 98.00 & \textbf{99.00} & 98.00 & \textbf{98.50} \\
    \texttt{place\_shoe} & \textbf{96.00} & \textbf{99.00} & \textbf{97.50} & 94.00 & \textbf{99.00} & 96.50 \\
    \texttt{press\_stapler} & 90.00 & \textbf{97.00} & 93.50 & \textbf{93.00} & 95.00 & \textbf{94.00} \\
    \texttt{put\_bottles\_dustbin} & \textbf{95.00} & 90.00 & \textbf{92.50} & 89.00 & \textbf{95.00} & 92.00 \\
    \texttt{put\_object\_cabinet} & \textbf{94.00} & \textbf{89.00} & \textbf{91.50} & 88.00 & \textbf{89.00} & 88.50 \\
    \texttt{rotate\_qrcode} & 93.00 & 89.00 & 91.00 & \textbf{95.00} & \textbf{90.00} & \textbf{92.50} \\
    \texttt{scan\_object} & 89.00 & 92.00 & 90.50 & \textbf{93.00} & \textbf{94.00} & \textbf{93.50} \\
    \texttt{shake\_bottle} & \textbf{100.00} & \textbf{100.00} & \textbf{100.00} & \textbf{100.00} & \textbf{100.00} & \textbf{100.00} \\
    \texttt{shake\_bottle\_horizontally} & \textbf{100.00} & \textbf{100.00} & \textbf{100.00} & \textbf{100.00} & 99.00 & 99.50 \\
    \texttt{stack\_blocks\_three} & \textbf{95.00} & \textbf{97.00} & \textbf{96.00} & 77.00 & 70.00 & 73.50 \\
    \texttt{stack\_blocks\_two} & \textbf{100.00} & \textbf{100.00} & \textbf{100.00} & \textbf{100.00} & \textbf{100.00} & \textbf{100.00} \\
    \texttt{stack\_bowls\_three} & 80.00 & 81.00 & 80.50 & \textbf{87.00} & \textbf{82.00} & \textbf{84.50} \\
    \texttt{stack\_bowls\_two} & 92.00 & \textbf{98.00} & \textbf{95.00} & \textbf{93.00} & 96.00 & 94.50 \\
    \texttt{stamp\_seal} & \textbf{90.00} & \textbf{94.00} & \textbf{92.00} & 87.00 & 89.00 & 88.00 \\
    \texttt{turn\_switch} & 61.00 & 59.00 & 60.00 & \textbf{70.00} & \textbf{78.00} & \textbf{74.00} \\
    \midrule
    \textbf{Average} & 91.88 & 91.78 & 91.83 & \textbf{93.18} & \textbf{92.34} & \textbf{92.76} \\

  \end{longtable}
\endgroup

% \clearpage
\section{Training Data, Data Collection, and Robot Platforms}
\label{app:data_collection_platforms}

% Use LaTeX boxes until the actual images are supplied; no robot photos are synthesized.
\newcommand{\appimageplaceholder}[2]{%
    \fbox{\begin{minipage}[c][#1][c]{\dimexpr\linewidth-2\fboxsep-2\fboxrule\relax}
        \centering\color{gray}\textbf{#2}\par\smallskip
        \small Image to be added
    \end{minipage}}}

\subsection{Training Data Overview}
\label{app:training_data_overview}
Figure~\ref{fig:app_training_data_overview} illustrates ten real-world manipulation tasks on the Spirit AI MOZ1 platform. We use a VR device to collect these real-world data.

\subsection{Robot Platforms}
\label{app:robot_platforms}
Figure~\ref{fig:app_robot_platforms} shows the Spirit AI MOZ1 and ROKAE AR5-5\_0.7 platforms used in our real-world experiments, together with the deployment pipeline. The green, red, and orange markers indicate the head-mounted camera, wrist-mounted RealSense cameras, and grippers, respectively. The camera views provide observations of the workspace and the regions near the grippers. During deployment, camera images and a natural-language task instruction are passed to the model running on a host computer, which predicts actions and sends them to the robot for execution.

\begin{figure}[t]
    \centering
    \includegraphics[width=\linewidth,height=0.7\textheight,keepaspectratio]{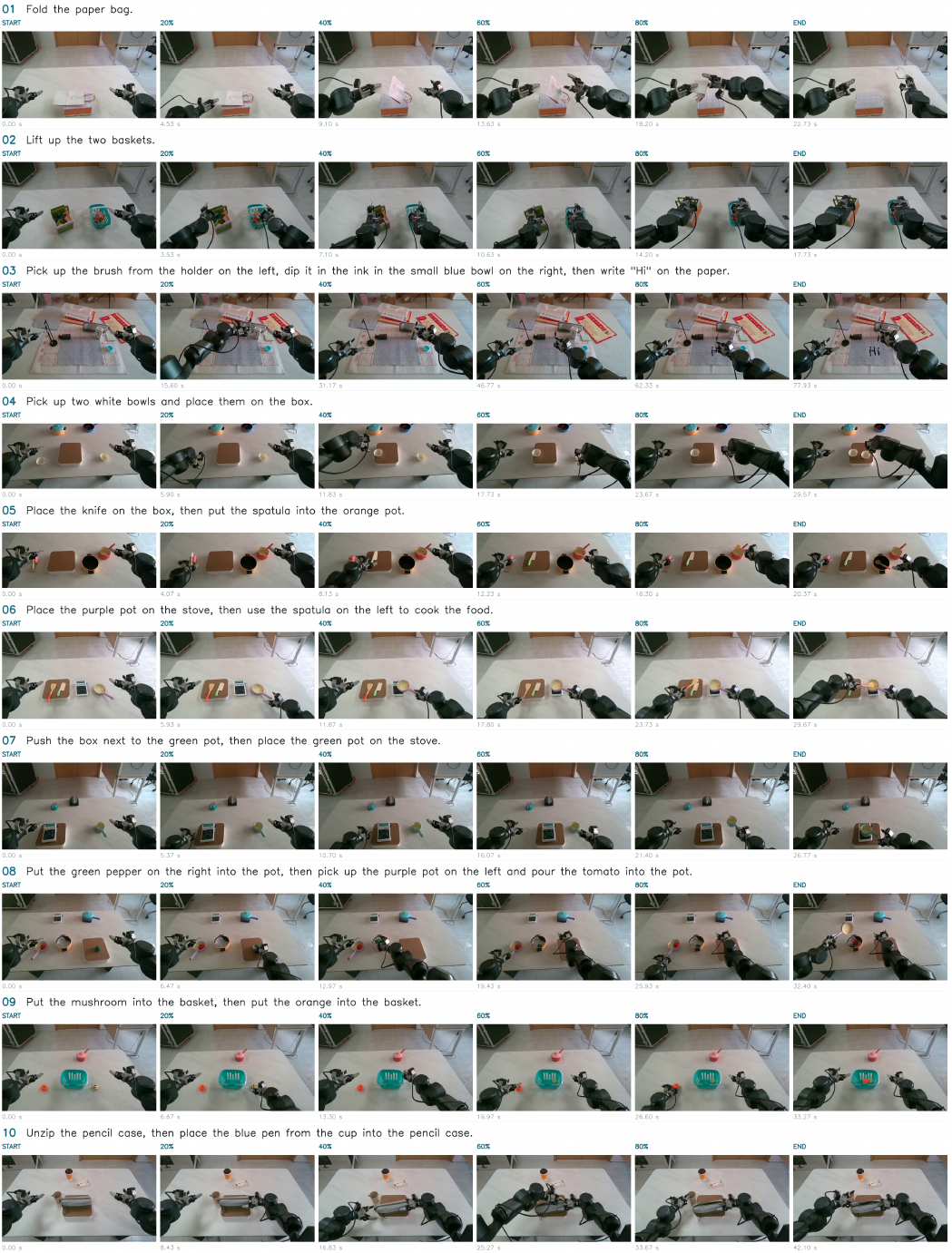}
    \caption{Overview of ten real-world manipulation tasks on Spirit AI MOZ1. Each row shows six frames from the high camera, covering an episode from start to finish.}
    \label{fig:app_training_data_overview}
\end{figure}

\begin{figure}[!t]
    \centering
    \includegraphics[width=\linewidth]{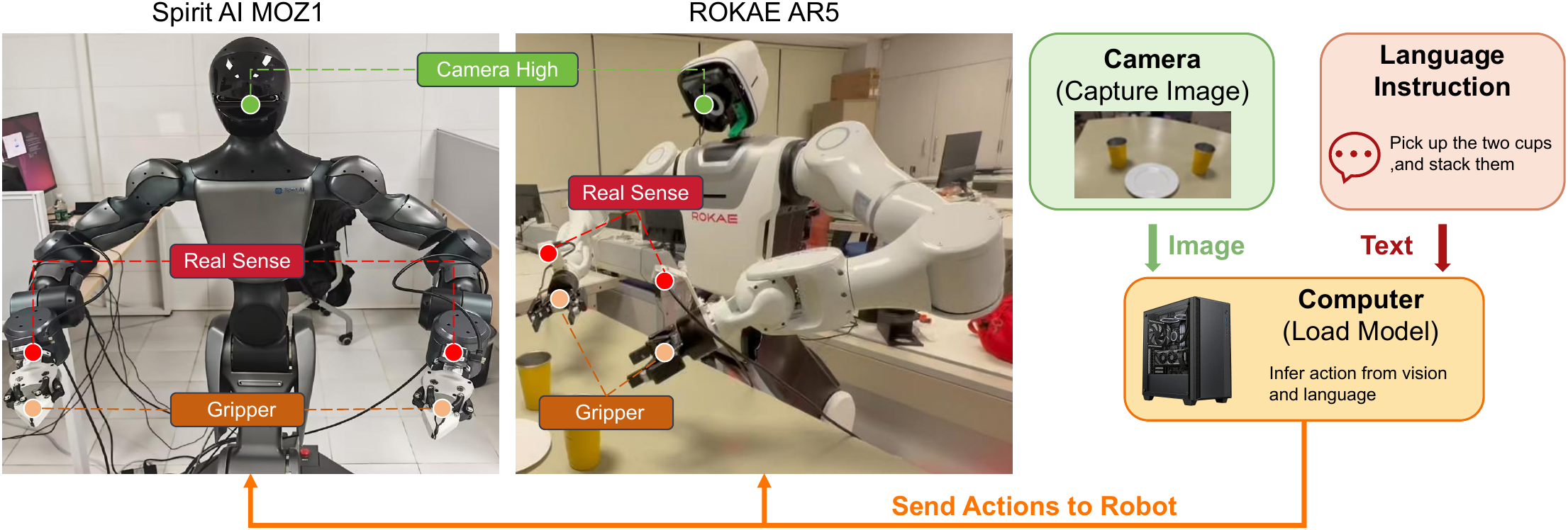}
    \caption{Robot platforms and deployment pipeline. Left and center: Spirit AI MOZ1 and ROKAE AR5-5\_0.7 with camera and gripper annotations. Right: camera images and language instructions are processed by the model on a host computer, and predicted actions are sent to the robot.}
    \label{fig:app_robot_platforms}
\end{figure}

\section{Analysis of Shared Action Experience}
\label{app:visualization_real_world}

\subsection{Cross-Task Similarity of Action Content Embeddings}
\label{app:aed_shared_experience}
 Manipulation tasks with distinct goals and visual contexts nevertheless share reusable action experience. For example, placing a can into a basket can draw on grasping, transporting, and releasing behaviors also used in other pick-and-place tasks. We examine this underlying relationship among manipulation tasks from two complementary perspectives: the similarity structure of action embeddings and the reuse of Action Experience Dictionary (AED) entries across task windows.

\begin{figure}[!htb]
    \centering
    \includegraphics[width=\linewidth]{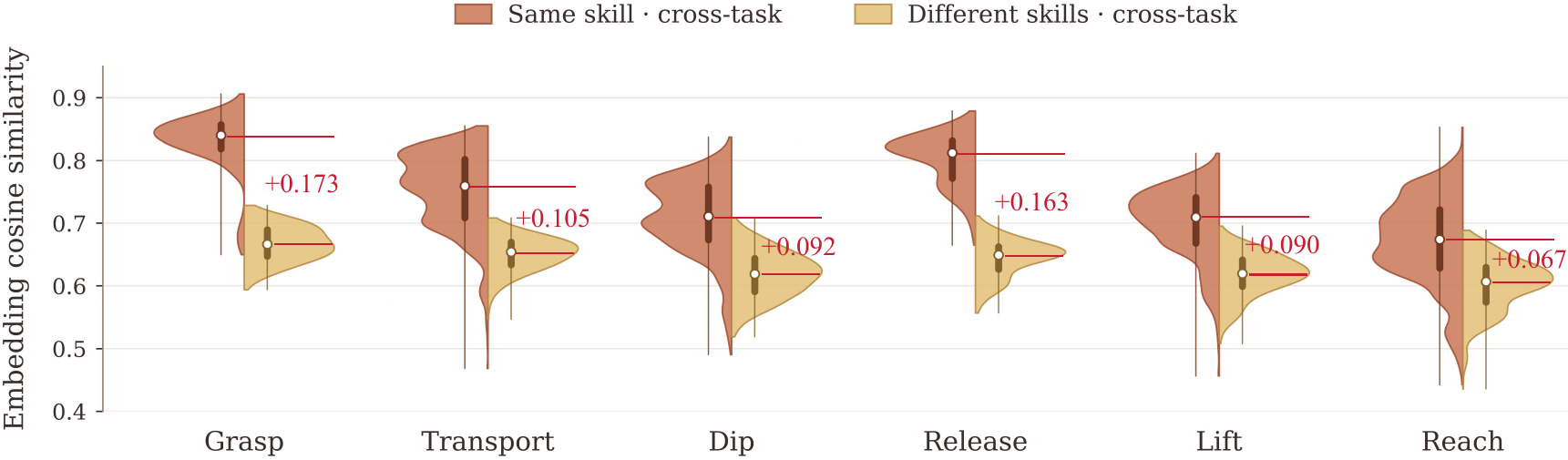}
    \caption{\textbf{Cross-task similarity of action content embeddings.} Orange and yellow compare the same skill and different skills across tasks, respectively. Each distribution sample is a weighted average cosine-similarity score for a cross-task pair; white points mark the medians of these task-pair scores. The annotated gaps for grasp, transport, dip, release, lift, and reach are $+0.173$, $+0.105$, $+0.092$, $+0.163$, $+0.090$, and $+0.067$, respectively.}
    \label{fig:app_aed_distribution}
\end{figure}

\paragraph{Representation and statistical unit.}
For each four-action interval, its FAST+ tokens undergo AED lookup and projection, followed by mean pooling over the retrieved valid tokens to produce a 1024-dimensional \emph{action content embedding}. This is the aggregated action representation before adding positional encoding or applying visual cross-attention, rather than a single AED vocabulary vector or a visually conditioned action embedding. Each sample in the violin distributions is a weighted average of cosine similarities for a cross-task pair, not an individual action vector or a single vector-pair similarity. The white points summarize these task-pair scores by their medians.

\paragraph{Cross-task organization of action content.}
Figure~\ref{fig:app_aed_distribution} shows higher median task-pair similarity for the same skill than for different skills across all six categories. Both comparison groups are cross-task. The largest annotated gaps occur for grasping ($0.173$) and releasing ($0.163$), while transporting also exhibits a positive gap ($0.105$). These behaviors directly match the reusable action experience in our motivation; dipping, lifting, and reaching extend the pattern beyond pick-and-place.

The overlapping distributions indicate graded relationships among skills rather than completely disjoint categories. Because this analysis precedes positional encoding and visual cross-attention, it locates the observed cross-task structure in the pooled action content itself. This supports the intended role of historical action trajectories as a source of reusable action experience across tasks.

\paragraph{Connection to the proposed mechanism.}
The pretrained action tokenizer supplies indices into the learnable AED. Repeated IDs establish access to the same dictionary entries, while the similarity distributions characterize the pooled, projected action content embeddings. An individual token ID need not denote an entire high-level skill. These complementary views connect shared dictionary access with the organization of action experience across tasks.

% \clearpage
\subsection{Shared AED Entries across Manipulation Tasks}
\label{app:aed_shared_entries}

\paragraph{Retrieval frequency.}
For each selected task window, the eight original four-action intervals are paired into four eight-action statistical groups. This grouping retains the original tokenization; it does not re-tokenize eight-action segments. Let $\mathcal{G}_{t,g}$ be the set of token IDs occurring in group $g$ of the selected window for task $t$. Then
\begin{equation}
    \operatorname{Frequency}(t,k)
    = \frac{1}{4}\sum_{g=1}^{4}\mathbf{1}\{k\in\mathcal{G}_{t,g}\}\times100\%.
    \label{eq:app_group_coverage}
\end{equation}
Thus, $75\%$ means presence in three of the four groups, irrespective of repetitions within a group. Retrieval is restricted to the selected window and is not a whole-task token frequency.

\paragraph{Shared dictionary access.}
In Fig.~\ref{fig:shared_tokens}, ID 300 occurs in all four task windows, with frequency of $75\%$, $100\%$, $25\%$, and $100\%$ for T1--T4. ID 309 has frequency of $75\%$ and $100\%$ in the two pick-and-place windows (T1 and T2); ID 308 has frequency of $75\%$, $100\%$, and $100\%$ in T1, T3, and T4. These overlapping, non-identical sets reveal shared AED access across different objects and goals, spanning both the pick-and-place examples in our motivation and drawer and stove interactions.

\begin{figure}[!t]
    \centering
    \includegraphics[width=0.98\linewidth]{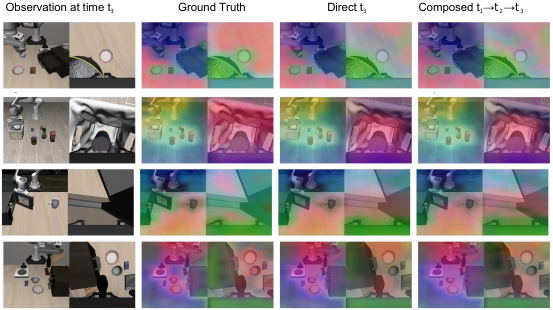}
    \caption{\textbf{Additional transition-prediction examples.} Each row shows, from left to right, the observation at $t_3$, a PCA visualization of its ground-truth features, the endpoint features estimated by direct $t_1\rightarrow t_3$ prediction, and those estimated by composing $t_1\rightarrow t_2$ and $t_2\rightarrow t_3$ predictions. The two prediction routes produce similar spatial feature patterns across the illustrated scenes.}
    \label{fig:app_transition_examples}
\end{figure}

As emphasized in the Introduction, similar motions can serve different purposes depending on objects and their spatial relationships. Shared AED entries supply reusable action experience, while subsequent visual conditioning supplies interaction context and action intent for the target task. These results indicate combining common action content with task-relevant visual information to exploit underlying relationships among manipulation tasks.

\section{Additional Qualitative Results}
\label{app:qualitative_results}

\subsection{Transition Prediction}
\label{app:transition_prediction}

Figure~\ref{fig:app_transition_examples} examines whether the learned transitions describe visual changes consistently across temporal intervals. For three time steps $t_1<t_2<t_3$, the direct prediction adds the predicted change over $t_1\rightarrow t_3$ to the features at $t_1$, whereas the composed prediction adds the changes over $t_1\rightarrow t_2$ and $t_2\rightarrow t_3$. Both therefore estimate the same endpoint features at $t_3$. The ground-truth column visualizes features extracted from the observation at $t_3$.

Across the four examples, the direct and composed predictions exhibit similar spatial organization and broadly reproduce the ground-truth feature patterns around the robot, scene objects, and surrounding surfaces. Agreement with the ground truth indicates that the two routes capture meaningful endpoint structure, while agreement between the routes supports temporal composition consistency: subdividing an interval yields a compatible estimate of the resulting visual state. This observation is consistent with the main-paper analysis linking the motion-aware transition (MT) objective to composition error, and supports learning action-related visual transitions across temporal scales.

\subsection{Visual Conditioning}
\label{app:visual_conditioning_examples}

Figure~\ref{fig:app_visual_conditioning} illustrates how historical action embeddings are associated with visual regions over a manipulation sequence. The eight panels correspond to $\mathbf{f}^{\star}[0]$ through $\mathbf{f}^{\star}[7]$ in temporal order, with two camera views in each panel. The overlaid responses show the visual regions associated with each action embedding, and the corresponding visualized trajectories are also provided.

\begin{figure}[t]
    \centering
    \includegraphics[width=0.98\linewidth]{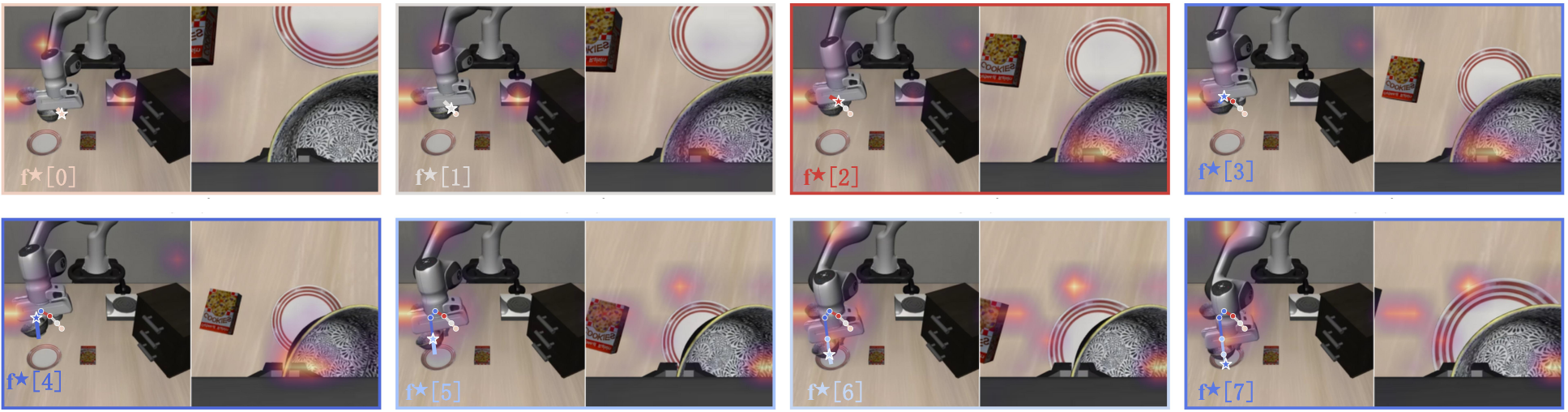}
    \caption{\textbf{Visual context associated with historical action embeddings.} Panels are ordered from left to right and top to bottom, corresponding to $\mathbf{f}^{\star}[0]$--$\mathbf{f}^{\star}[7]$. Each panel pairs two camera views with overlaid visual responses and trajectory markers. The responses vary with the interaction stage and include regions around the robot gripper and nearby objects.}
    \vspace{-6mm}
    \label{fig:app_visual_conditioning}
\end{figure}

The response patterns change as the gripper moves through the scene. In the earlier panels, visible responses occur near the robot and objects adjacent to the gripper; in later panels, responses also appear around the gripper--object interaction and the plate region as the trajectory approaches it. This stage-dependent association is consistent with visual conditioning supplying the object and spatial context needed to interpret historical motions. Such context matters because a reusable motion alone does not specify which object it acts on or how that object relates to the current goal.

Together with the shared action-content analysis in Appendix~\ref{app:aed_shared_experience}, these examples support complementary roles for the two components: AED provides reusable action experience, and visual conditioning associates that experience with the observed interaction context. The localized responses are also consistent with the intended emphasis of MT supervision on action-relevant visual information.

\section{Ablation Studies}
\label{app:ablations}

\subsection{Experimental Protocol}
\label{app:ablation_protocol}
The ablation uses seed 0, 32 predicted actions per call, ten executed actions before replanning, with 50 trials per task across ten LIBERO-10 tasks. SR denotes success rate in percent. All six ablation plots use 97.4\% as the shared full-model reference.

\subsection{Action Aggregation and Temporal Sampling}
\label{app:action_ablation_design}
\paragraph{Action aggregation.}
In Fig.~\ref{fig:ablation_aggregation}, Agg$\rightarrow$Tok aggregates each four-action interval before FAST+ tokenization, Tok$\rightarrow$Agg jointly tokenizes the four actions before pooling their embeddings, and MLP encodes the aggregated continuous actions with a multilayer perceptron. The Agg$\rightarrow$Tok (Ours) outperforms the other methods.

\paragraph{Temporal sampling.}
Figure~\ref{fig:ablation_sampling} compares Start$\rightarrow$Span, which samples a start first and then a valid span, Span$\rightarrow$Start, which reverses this order, a fixed temporal window, and the configuration without interval sampling. The 96.8\% result jointly removes random sampling and the motion-aware transition (MT) objective.

\begin{figure}[!htbp]
\centering
\captionsetup{position=bottom,skip=3pt,font=small}
\begin{minipage}[t]{0.485\linewidth}
\centering
\includegraphics[width=\linewidth]{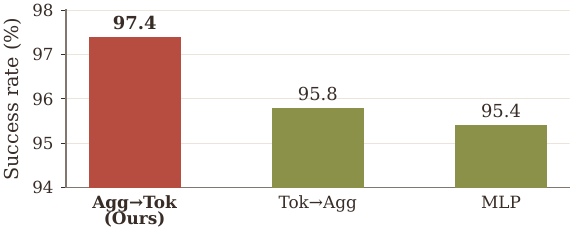}
\caption{Action aggregation.}\label{fig:ablation_aggregation}
\end{minipage}
\hfill
\begin{minipage}[t]{0.485\linewidth}
\centering
\includegraphics[width=\linewidth]{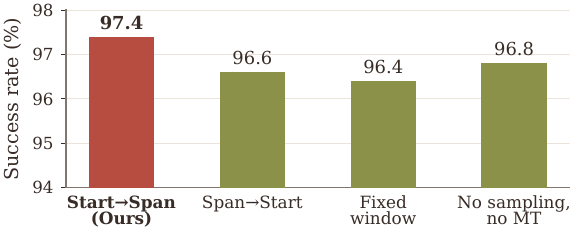}
\caption{Temporal sampling.}\label{fig:ablation_sampling}
\end{minipage}
\end{figure}

\subsection{Sampling Window Distribution}
\label{app:window_sampling_distribution}
To clarify the Start$\rightarrow$Span strategy, Fig.~\ref{fig:app_window_sampling} visualizes the probability of sampling each temporal window for motion-aware transition supervision. Let $r=t'-t$ denote the start offset in visual intervals. The sampler first draws $r$ uniformly from $\{1,\ldots,V-1\}$, then draws $\Delta$ uniformly from $\{1,\ldots,V-r\}$. Consequently,
\begin{equation}
    P(r,\Delta)=\frac{1}{(V-1)(V-r)},
    \qquad 1\leq r\leq V-1,\quad 1\leq\Delta\leq V-r,
    \label{eq:app_window_joint}
\end{equation}
with zero probability outside this support. Summing over valid start offsets gives
\begin{equation}
    P(\Delta)=\frac{1}{V-1}\sum_{r=1}^{V-\Delta}\frac{1}{V-r},
    \qquad \Delta\in\{1,\ldots,V-1\}.
    \label{eq:app_window_marginal}
\end{equation}

\begin{figure}[!htb]
    \centering
    \includegraphics[width=\linewidth]{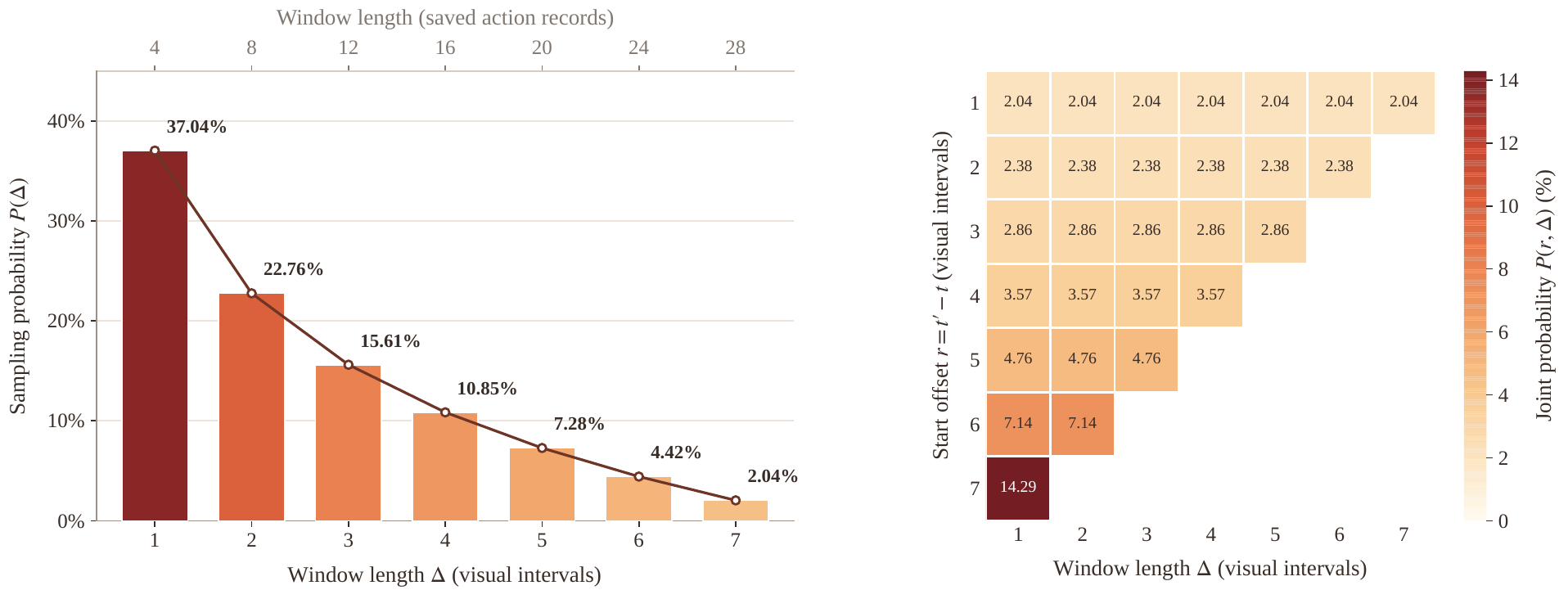}
    \caption{\textbf{Temporal-window sampling probabilities for Start$\rightarrow$Span}, with $V=8$. Left: marginal length distribution $P(\Delta)$; the upper axis counts saved action records (four per visual interval). Right: joint distribution $P(r,\Delta)$ over start offsets and lengths; blank cells are invalid windows. All values are percentages.}
    \label{fig:app_window_sampling}
\end{figure}

Uniform draws at each stage do not give uniform window lengths: for $V=8$, $P(\Delta=1)=37.04\%$ and $P(\Delta=7)=2.04\%$. Short windows are valid at more start offsets, including late starts with few remaining choices. The resulting MT supervision emphasizes short-term visual transitions while retaining positive probability for every valid longer interval.

\subsection{Visual Encoder and Query Count}
\label{app:visual_ablation_design}
\paragraph{Pretrained visual encoder.}
Figure~\ref{fig:ablation_visual_encoder} compares LingBot-Vision and DINOv3 as alternative pretrained encoders for visual feature extraction.

\paragraph{Visual-memory query count.}
Figure~\ref{fig:ablation_queries} varies the number of learnable visual-memory queries across 32, 64, 128, and 256, with 128 as the proposed setting.

\begin{figure}[!htbp]
\centering
\captionsetup{position=bottom,skip=3pt,font=small}
\begin{minipage}[t]{0.485\linewidth}
\centering
\includegraphics[width=\linewidth]{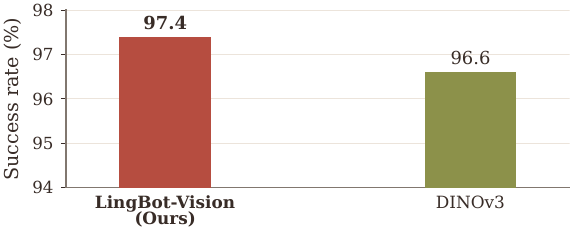}
\caption{Pretrained visual encoder.}\label{fig:ablation_visual_encoder}
\end{minipage}
\hfill
\begin{minipage}[t]{0.485\linewidth}
\centering
\includegraphics[width=\linewidth]{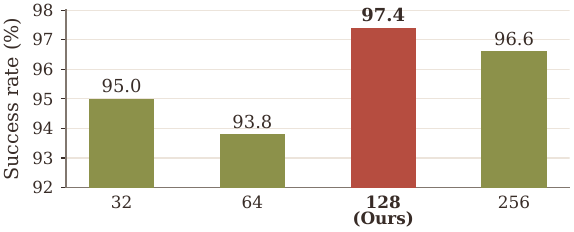}
\caption{Visual-memory query count.}\label{fig:ablation_queries}
\end{minipage}
\end{figure}

\subsection{Prefix Content and Injection}
\label{app:prefix_ablation_design}
\paragraph{Prefix content.}
In Fig.~\ref{fig:ablation_prefix_content}, Vision-only constructs the history prefix from historical visual features, whereas Action-only uses historical action embeddings without VC. The 95.6\% Action-only setting also removes MT; the 94.8\% setting jointly removes AED, VC, and MT.

\paragraph{Prefix injection.}
In Fig.~\ref{fig:ablation_prefix_injection}, Prefix prepends the history tokens to the action sequence, whereas Cross-attn conditions the action expert through separate residual cross-attention modules.

\begin{figure}[!htbp]
\centering
\captionsetup{position=bottom,skip=3pt,font=small}
\begin{minipage}[t]{0.485\linewidth}
\centering
\includegraphics[width=\linewidth]{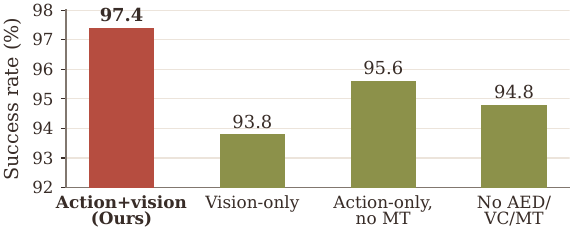}
\caption{Prefix content.}\label{fig:ablation_prefix_content}
\end{minipage}
\hfill
\begin{minipage}[t]{0.485\linewidth}
\centering
\includegraphics[width=\linewidth]{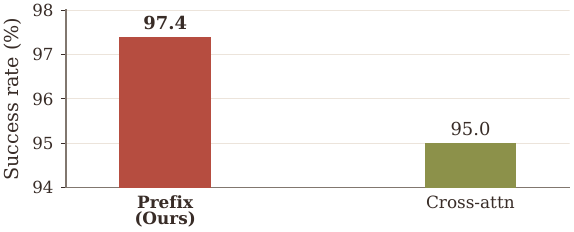}
\caption{Prefix injection.}\label{fig:ablation_prefix_injection}
\end{minipage}
\end{figure}
\end{document}